\newcommand{\req}[1]{Eq.~(\ref{#1})}
\newcommand{\rfig}[1]{Fig.~\ref{#1}}
\newcommand{\rtab}[1]{Tab.~\ref{#1}}
\newtheorem{theorem}{Theorem}
\newtheorem{definition}{Definition}
\title{Sigsoftmax: Reanalysis of the Softmax Bottleneck}
\author{
  Sekitoshi~Kanai\\ 
NTT Software Innovation Center, Keio Univ.\\
  \texttt{kanai.sekitoshi@lab.ntt.co.jp} \\
  \And
Yasuhiro~Fujiwara\\ 
NTT Software Innovation Center\\
  \texttt{ fujiwara.yasuhiro@lab.ntt.co.jp} \\
\And
Yuki~Yamanaka\\ 
NTT Secure Platform Laboratories\\
  \texttt{ yamanaka.yuki@lab.ntt.co.jp} \\
   \And
 Shuichi~Adachi \\
 Keio Univ.  \\
  \texttt{adachi.shuichi@appi.keio.ac.jp} \\
}
\begin{document}

\maketitle
\begin{abstract}
Softmax is an output activation function
for modeling categorical probability distributions in many applications of deep learning.
However, a recent study revealed that softmax can be a bottleneck of representational
capacity of neural networks in language modeling (the softmax bottleneck).
In this paper, we propose an output activation function for breaking the softmax bottleneck without additional parameters.
We re-analyze the softmax bottleneck from the perspective of the output set of log-softmax and
identify the cause of the softmax bottleneck.
On the basis of this analysis, we propose sigsoftmax, which
 is composed of a multiplication of an exponential function and sigmoid function. 
 Sigsoftmax can break the softmax bottleneck.
 The experiments on language modeling demonstrate that sigsoftmax
and mixture of sigsoftmax outperform softmax and mixture of softmax, respectively.
\end{abstract}

\section{Introduction}
\label{Intro}
Deep neural networks are used in many recent applications such as image recognition \cite{krizhevsky2012imagenet,resnet},  speech recognition \cite{graves2013speech}, and
natural language processing \cite{tomas,NIPS2014_5346, cho-al-emnlp14}.
High representational capacity and generalization performance of deep neural networks
are achieved by many layers, activation functions and regularization methods \cite{relu, resnet, srivastava2014dropout,ioffe2015batch,goodfellow2016deep}.
Although various model architectures are built in the above applications,
 softmax is commonly used as an output activation function
for modeling categorical probability distributions
 \cite{softmax,goodfellow2016deep,resnet, tomas, NIPS2014_5346, cho-al-emnlp14, graves2013speech}.
 For example, in language modeling, softmax is employed for representing
  the probability of the next word over the vocabulary in a sentence.
When using softmax, we train the model by minimizing negative log-likelihood
 with a gradient-based optimization method.
 We can easily calculate the gradient of negative log-likelihood with softmax, and it is numerically stable \cite{bridle1990training,softmax}.

 Even though softmax is widely used, few studies have attempted
  to improve its modeling performance \cite{chen2017noisy,de2015exploration}. 
  This is because deep neural networks with softmax are believed to have a universal approximation property.
However, \citet{softmaxbottle} recently revealed that softmax can be
 a bottleneck of representational capacity in language modeling.
They showed that
 the representational capacity of the softmax-based model is restricted by the length of the hidden vector in the output layer.
In language modeling, the length of the hidden vector is much smaller than
the vocabulary size. 
As a result, the softmax-based model cannot completely learn the true probability distribution, and this is called the softmax bottleneck.
 For breaking the softmax bottleneck, \citet{softmaxbottle} proposed mixture of softmax (MoS) that
mixes the multiple softmax outputs.
However, this analysis of softmax does not explicitly show why 
softmax can be a bottleneck. 
Furthermore, MoS is an additional layer or mixture model rather than an alternative
 activation function to softmax: MoS has learnable parameters and hyper-parameters.

In this paper, we propose a novel output activation function for breaking the softmax bottleneck without additional parameters.
We re-analyze the softmax bottleneck from the point of view of the output set (range) of a function and show why softmax can be a bottleneck.
This paper reveals that (i) the softmax bottleneck occurs because softmax uses only exponential functions for nonlinearity and 
(ii) the range of log-softmax is a subset of the vector space whose dimension depends on the dimension of the input space.
As an alternative activation function to softmax, 
we explore the output functions composed of rectified linear unit (ReLU) and sigmoid functions.
In addition, we propose {\it sigsoftmax}, which
 is composed of a multiplication of an exponential function and sigmoid function. 
Sigsoftmax has desirable properties for output activation functions, e.g.,
the calculation of its gradient is numerically stable.
More importantly, sigsoftmax can break the softmax bottleneck,
and the range of softmax can be a subset of that of sigsoftmax.
Experiments in language modeling demonstrate that
sigsoftmax can break the softmax bottleneck and outperform softmax.
In addition, mixture of sigsoftmax outperforms MoS.
\section{Preliminaries}
\subsection{Softmax}
\label{softmax subsec}
Deep neural networks use softmax in learning categorical distributions.
For example, in the classification, a neural network uses softmax 
 to learn the probability distribution over $M$ classes $\bm{y}\in \bm{R}^M$ conditioned on the input $\bm{x}$ as $P_{\bm{\theta}}(\bm{y}|\bm{x})$ 
where $\bm{\theta}$ is a parameter.
Let $\bm{h}(\bm{x})\in \bm{R}^d$ be a hidden vector and
 $\bm{W}\in \bm{R}^{M\times d}$ be a weight matrix in the output layer, the
output of softmax $\bm{f}_s(\cdot)$ represents the conditional probability of the $i$-th class as follows:
\begin{align}
\textstyle
P_{\bm{\theta}}(y_i|\bm{x})=
\left[\bm{f}_s(\bm{W}\bm{h}(\bm{x}))\right]_i
=\frac{\mathrm{exp}(\left[\bm{W}\bm{h}(\bm{x})\right]_i)}
{\sum_{m=1}^{M} \mathrm{exp}(\left[\bm{W}\bm{h}(\bm{x})\right]_m)},
\label{sbeq}
\end{align}
where $\left[\bm{f}_s\right]_i$ represents the $i$-th element of $\bm{f}_s$. 
We can see that each element of $\bm{f}_s$ is bounded from zero to one
 since the output of exponential functions is non-negative in \req{sbeq}. 
The summation of all elements of $\bm{f}_s$ is obviously one.
From these properties, we can  regard output of the softmax trained by minimizing negative
log-likelihood as a probability \cite{softmax,memisevic2010gated}.
If we only need the most likely label,
we can find such a label 
by comparing elements of $\bm{Wh}(\bm{x})$ without the calculations of softmax $\bm{f_s}(\bm{Wh}(\bm{x}))$
once we have trained the softmax-based model.
This is because exponential functions in softmax are monotonically increasing.

To train the softmax-based models, negative log-likelihood
(cross entropy) is used as a loss function.
Since the loss function is minimized by stochastic gradient descent (SGD),
 the properties of the gradients of functions are very important \cite{relu, pascanu2013difficulty,glorot2010understanding}.
One advantage of softmax is that
 the gradient of log-softmax is easily calculated as follows \cite{bridle1990training,softmax,bishop1995neural,de2015exploration}: 
\begin{align}
\textstyle
\frac{\partial\left[\mathrm{log}\bm{f}_s(\bm{z})\right]_i}{\partial z_j}
&\textstyle =
\begin{cases}\textstyle
1-\left[\bm{f}_s(\bm{z})\right]_j~~~\mathrm{if}~j=i,\\
\textstyle
-\left[\bm{f}_s(\bm{z})\right]_j~~~\mathrm{if}~j\neq i,
\end{cases}
\label{GradSoft}
\end{align}
where $\bm{z}=\bm{W}\bm{h}(\bm{x})$.
Whereas the derivative of the logarithm can cause a division by zero since
 $\frac{\mathrm{d} \mathrm{log}(z)}{\mathrm{d}z}\!=\!\frac{1}{z}$, 
 the derivative of log-softmax cannot.
As a result, softmax is numerically stable.
\subsection{Softmax bottleneck}
\label{sbsec1}
In recurrent neural network (RNN) language modeling, given a corpus of tokens $\bm{Y}=(Y_1,\dots,Y_T)$, 
the joint probability $P(\bm{Y})$ is factorized as $P(\bm{Y})=\prod_t P(Y_t|Y_{<t})=\prod_t P(Y_t|X_t)$, where
$X_t=Y_{<t}$ is referred to as the context of the conditional probability.
Output of softmax $\bm{f}_s(\bm{W}\bm{h}(X_t))$ learns $P(Y_t|X_t)$ where (a) $\bm{h}(X_t)\in \bm{R}^d$
 is the hidden vector corresponding to the context $X_t$ and (b) $\bm{W}$ is a weight matrix in the output layer (embedding layer).
A natural language is assumed as a finite set of pairs of $x_t$ and $P^*(Y|x_t)$ as 
$\mathcal{L}=\left\{ (x_1,P^*(Y|x_1)),\dots,(x_N,P^*(Y|x_N)) \right\}$, where $N$ is the number of possible contexts. 
The objective of language modeling is to learn a model distribution $P_{\bm{\theta}}(Y|X)$ parameterized 
by $\bm{\theta}$ to match the true data distribution $P^*(Y|X)$.
Note that upper- and lower-case letters are used for variables and constants, respectively, in this section.
Under the above assumptions,
let $y_1,\dots, y_M$ be $M$ possible tokens in the language $\mathcal{L}$,
 the previous study of \citet{softmaxbottle} considers
 the following three matrices:
\begin{align}
\textstyle
\bm{H}_{\bm{\theta}}\!=\!\left[\!\!\!\begin{array}{c}
\bm{h}(x_1)^T\\
\bm{h}(x_2)^T\\
\vdots\\
\bm{h}(x_N)^T
\end{array}\!\!\!\!\right]\!\!, 
\bm{W},
\bm{A}=\!\left[\!\!\!\begin{array}{cccc}
\mathrm{log}P^*(y_1|x_1),&\!\mathrm{log}P^*(y_2|x_1),&\!\dots&\mathrm{log}P^*(y_M|x_1)\\
\mathrm{log}P^*(y_1|x_2),&\!\mathrm{log}P^*(y_2|x_2),&\!\dots&\mathrm{log}P^*(y_M|x_2)\\
\vdots&\!\vdots&\ddots&\!\vdots\\
\mathrm{log}P^*(y_1|x_N),&\!\mathrm{log}P^*(y_2|x_N),&\!\dots&\mathrm{log}P^*(y_M|x_N)
\end{array}\!\!\!\!\right]\!\!.
\label{A}
\end{align}
$\bm{H}_{\bm{\theta}}\in \bm{R}^{N\times d}$ is a matrix composed of the hidden vectors, $\bm{W} \in \bm{R}^{M\times d}$ is a weight matrix, and 
 $\bm{A}\in\bm{R}^{M\times N}$ is a matrix composed of the log probabilities of the true distribution.
By using these matrices, the rank of
$\bm{H}_{\bm{\theta}}\bm{W}^T$ should be greater than or equal to $\mathrm{rank}(\bm{A})-1$
so that the softmax-based model completely learns $\mathcal{L}$ \cite{softmaxbottle}.
However, the rank of $\bm{H}_{\bm{\theta}}\bm{W}^T$ is at most $d$ 
if any functions $\mathcal{U}$ are used for $\bm{H}_{\bm{\theta}}$ and $\bm{W}$. 
Therefore, if we have $d<\mathrm{rank}(\bm{A})-1$, softmax can be the bottleneck of representational capacity
 as shown in the following theorem: 
\begin{theorem}[Softmax Bottleneck \citep{softmaxbottle}]
If $d<\mathrm{rank}(\bm{A})-1$, for any function family $\mathcal{U}$ and any model
parameter $\bm{\theta}$, there exists a context $x$ in $\mathcal{L}$ such that
$P_{\bm{\theta}}(Y|x)\neq P^{*} (Y|x)$.
\label{sb1}
\end{theorem}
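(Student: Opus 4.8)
The plan is to argue by contraposition: I will show that if the softmax-based model could reproduce $P^{*}$ at \emph{every} context, then $\bm{A}$ would be forced to satisfy $\mathrm{rank}(\bm{A}) \le d+1$, which contradicts the hypothesis $d < \mathrm{rank}(\bm{A})-1$. First I would translate ``the model matches the truth everywhere'' into a single matrix equation. Taking the logarithm of \req{sbeq} gives $\log P_{\bm{\theta}}(y_i|x_j) = [\bm{W}\bm{h}(x_j)]_i - \log\sum_{m}\exp([\bm{W}\bm{h}(x_j)]_m)$, and the crucial structural fact is that the normalizing term depends only on the context index $j$, not on the token index $i$. Hence the model log-probability matrix equals $\bm{H}_{\bm{\theta}}\bm{W}^T$ minus a matrix that is constant along each row, and matching $P_{\bm{\theta}}=P^{*}$ at every context is exactly the statement that there exists $\bm{c}\in\bm{R}^{N}$ with
\[
\bm{H}_{\bm{\theta}}\bm{W}^T = \bm{A} + \bm{c}\bm{1}^T,
\]
where $\bm{1}$ is the all-ones vector of the appropriate length. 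This is simply the shift-invariance of softmax made explicit: adding any per-row constant to the logits leaves the softmax output unchanged.

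Next I would read off the two rank bounds and combine them. On the left, $\bm{H}_{\bm{\theta}}\bm{W}^T$ factors through a $d$-dimensional space (it is an $N\times d$ matrix times a $d\times M$ matrix), so $\mathrm{rank}(\bm{H}_{\bm{\theta}}\bm{W}^T)\le d$ \emph{regardless} of the function family $\mathcal{U}$ used to produce $\bm{H}_{\bm{\theta}}$ and $\bm{W}$. On the right, $\bm{c}\bm{1}^T$ has rank at most one, so by subadditivity of rank, $\mathrm{rank}(\bm{A}) \le \mathrm{rank}(\bm{A}+\bm{c}\bm{1}^T)+1$. Feeding these into the matrix equation yields $\mathrm{rank}(\bm{A})-1 \le \mathrm{rank}(\bm{A}+\bm{c}\bm{1}^T) = \mathrm{rank}(\bm{H}_{\bm{\theta}}\bm{W}^T) \le d$, i.e.\ $d \ge \mathrm{rank}(\bm{A})-1$. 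This contradicts the hypothesis, so no choice of $\mathcal{U}$ and $\bm{\theta}$ can match $P^{*}$ at every context, and therefore some context $x$ with $P_{\bm{\theta}}(Y|x)\neq P^{*}(Y|x)$ must exist.

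Honestly the mathematics here is light: both the rank-one perturbation bound and the factorization bound are standard. I expect the only genuinely delicate point to be the reformulation step, namely justifying carefully that \emph{any} logit matrix realizing $P^{*}$ under softmax must differ from $\log P^{*}$ by a row-wise constant and nothing more, so that the problem collapses exactly to comparing $\mathrm{rank}(\bm{A})$ against $d+1$. I would also make explicit one regularity assumption that is used silently, namely that $P^{*}(y_i|x_j)>0$ for all $i,j$, so that $\bm{A}=\log P^{*}$ has finite entries and the logarithm step is legitimate; otherwise $\bm{A}$ would not even be well defined.
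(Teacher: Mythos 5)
Your proof is correct, but it is not the route this paper takes: what you have written is essentially the original matrix-factorization argument of \citet{softmaxbottle}, whereas this paper deliberately re-derives Theorem~\ref{sb1} as a corollary of its Theorem~\ref{sb2}, which states that the range of log-softmax $\{\mathrm{log}(\bm{f}_s(\bm{z}))\mid\bm{z}\in S\}$ is contained in the $(d+1)$-dimensional subspace spanned by a basis of the input space $S$ together with $\bm{1}$; the paper then simply observes that $\mathrm{rank}(\bm{A})>d+1$ means the true log-probability vectors contain more than $d+1$ linearly independent vectors, so they cannot all lie in that subspace. The two arguments are two faces of the same fact: your identity $\bm{H}_{\bm{\theta}}\bm{W}^T=\bm{A}+\bm{c}\bm{1}^T$ together with $\mathrm{rank}(\bm{c}\bm{1}^T)\le 1$ is exactly the statement that the model's log-probability rows live in $\mathrm{span}\{\bm{u}^{(1)},\dots,\bm{u}^{(d)},\bm{1}\}$, which is the subspace of Eq.~(\ref{hougan}). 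What your version buys is self-containedness and brevity (two standard rank inequalities and no auxiliary theorem); what the paper's version buys is an explicit identification of the \emph{cause} of the bottleneck --- namely that $\mathrm{log}(\mathrm{exp}(z))=z$ makes log-softmax an affine function of $\bm{z}$ plus a multiple of $\bm{1}$ --- which is precisely the observation that motivates sigsoftmax, so the detour through Theorem~\ref{sb2} is the point of the section rather than an inefficiency. Two small remarks on your write-up: your reformulation only needs the forward implication (matching $P^*$ everywhere forces the matrix equation), so the word ``exactly'' claims slightly more than you use; and your explicit flag that $P^*(y_i|x_j)>0$ is needed for $\bm{A}$ to have finite entries is a legitimate regularity condition that both this paper and the original leave implicit --- it is a worthwhile addition, not a defect.
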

This theorem shows that the length of the hidden vector in the output layer determines
 the representational power of RNN with softmax.
In language modeling, the rank of $\bm{A}$ can be extremely high since contexts can vary and vocabulary size $M$ is
much larger than $d$.
Therefore, the softmax can be the bottleneck of the representational power.
\subsection{Mixture of softmax}
A simple approach to improving the representational capacity is to use a weighted sum of the several models.
In fact, \citet{softmaxbottle} use this approach for breaking the softmax bottleneck.
As the alternative to softmax, they propose the mixture of softmax (MoS), which
is the weighted sum of $K$ softmax functions:
\begin{align}
\textstyle
P_{\bm{\theta}}(y_i|x)=\sum_{k=1}^{K}\pi(x,k)\frac{\mathrm{exp} (\left[\bm{W}\bm{h}(x,k)\right]_i)}{\sum_{m=1}^{M}\mathrm{exp} (\left[\bm{W}\bm{h}(x,k)\right]_m)},
\end{align}
where $\pi(x,k)$ is the prior or mixture weight of the $k$-th component, and $\bm{h}(x,k)$ is the $k$-th context
vector associated with the context $x$.
Let $\bm{h}'(x)$ be input of MoS for the context $x$.
The priors and context vectors are parameterized as 
$\pi({x},k)=\frac{\mathrm{exp}(\bm{w}_{\pi,k}^T\bm{h}'(x))}{\sum_{k'=1}^K\mathrm{exp}(\bm{w}_{\pi, k'}^T\bm{h}'(x))}$
and $\bm{h}({x},k)=\mathrm{tanh}(\bm{W}_{h,k}\bm{h}'(x))$, respectively.
MoS can break the softmax bottleneck since the rank of the approximate $\bm{A}$ can be arbitrarily large \cite{softmaxbottle}.
Therefore, language modeling with MoS performs better than that with softmax. 
However, in this method, the number of mixtures $K$ is the hyper-parameter which needs to be tuned.
In addition, weights $\bm{W}_{h,k}$ and $\bm{w}_{\pi,k}$ are additional parameters.
Thus, MoS can be regarded as an additional layer or mixing technique rather than the improvement of the activation
function.
\subsection{Related work}
Previous studies proposed 
 alternative functions to softmax \cite{de2015exploration, mohassel2017secureml,ollivier2013riemannian}. 
The study of \citet{de2015exploration} explored spherical family functions: the spherical softmax and Taylor softmax.
 They showed that these functions do not outperform softmax
when the length of an output vector is large.
 In addition, the spherical softmax has a hyper-parameter
that should be carefully tuned for numerical stability reasons \cite{de2015exploration}.
On the other hand, the Taylor softmax might suffer from the softmax bottleneck since it approximates softmax.
\citet{mohassel2017secureml} proposed a ReLU-based alternative function to softmax for 
privacy-preserving machine learning since softmax is expensive to compute inside a
secure computation. However, it leads to a division by zero
 since all outputs of ReLUs frequently become zeros and the denominator for normalization becomes zero.
Several studies improved the efficiency of softmax \cite{joulin2017efficient,shim2017svd,aueb2016one,martins2016softmax}.
However they did not improve the representational capacity.
\section{Proposed method}
\subsection{Reanalysis of the softmax bottleneck}
\label{sbsec2}
The analysis of the softmax bottleneck \cite{softmaxbottle} is based on matrix factorization 
and reveals that the rank of $\bm{H_{\theta}W_\theta}^T$
needs to be greater than or equal to $\mathrm{rank}(A)-1$.
Since the rank of $\bm{H_{\theta}W_\theta}^T$ becomes the length of the hidden vector in the output layer,
 the length of the hidden vector determines the representational power as described in Sec.~\ref{sbsec1}.
However, this analysis does not explicitly reveal the cause of the softmax bottleneck.
To identify the cause of the softmax bottleneck, 
we re-analyze the softmax bottleneck from the perspective of the range of log-softmax
because it should be large enough to approximate the true log probabilities.

Log-softmax is a logarithm of softmax and is used in training of deep learning as mentioned in Sec.~\ref{softmax subsec}.
By using the notation in Sec.~\ref{softmax subsec}, log-softmax $\mathrm{log}(\bm{f}_s(\bm{z}))$ can be represented as 
$\left[\mathrm{log}(\bm{f}_s(\bm{z}))\right]_i\!=\!\mathrm{log}\left(\!\frac{\mathrm{exp}(z_i)}
{\sum_{m=1}^{M} \!\mathrm{exp}(z_m)}\right)
\!=\!z_i-\mathrm{log}(\sum_{m=1}^{M} \mathrm{exp}(z_m))$.
This function can be expressed as 
\begin{align}
\textstyle 
\mathrm{log}\left(\bm{f}_s(\bm{z})\right)=\bm{z}-\mathrm{log}(\sum_{m=1}^{M} \mathrm{exp}(z_m) )\bm{1},
\label{logsoft}
\end{align}
where $\bm{1}$ is the vector of all ones. 
To represent various log probability distributions $\mathrm{log}(P^*(\bm{y}|\bm{x}))$, 
the range of $\log\left(\bm{f}_s(\bm{z})\right)\in \bm{R}^M$ should be sufficiently large.
Therefore, we investigate the range of $\mathrm{log}\left(\bm{f}_s(\bm{z})\right)$.
We assume that the hidden vector $\bm{h}$ in the output layer can be an arbitrary vector in $\bm{R}^d$ where $d\leq M$,
 and the weight matrix $\bm{W}\in\bm{R}^{M\times d}$ is
 the full rank matrix; the rank of $\bm{W}$ is $d$.\footnote{If neural networks have the universal approximation property, 
 $\bm{h}$ can be an arbitrary vector in $\bm{R}^d$. If not, the input space is a subset of a $d$ dimensional vector space,
  and the range of log-softmax is still a subset of a $d+1$ dimensional vector space.
 When $\mathrm{rank}(\bm{W})<d$, we can examine the range of log-softmax in the same way by replacing $d$ with $\mathrm{rank}(\bm{W})$.
 If a bias is used in the output layer, the dimension of $S$ can be $d+1$.
 }
 Under these assumptions, 
the input vector space of softmax $S$ ($\bm{z}\in S$) is a $d$ dimensional vector space, and
 we have the following theorem:
 \begin{theorem}
Let $S\subseteq \bm{R}^M$ be the $d$ dimensional vector space and $\bm{z}\in S$ be input of log-softmax,
every range of the log-softmax $\{\mathrm{log}(\bm{f}_s(\bm{z}))|\bm{z} \in S\}$ is a subset of the $d+1$ dimensional vector space.\label{sb2}
\end{theorem}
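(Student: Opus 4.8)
The plan is to read the geometry of the range directly off the closed form of log-softmax in \req{logsoft}. That identity writes $\mathrm{log}(\bm{f}_s(\bm{z})) = \bm{z} - c(\bm{z})\bm{1}$, where $c(\bm{z}) = \mathrm{log}(\sum_{m=1}^{M}\mathrm{exp}(z_m))$ is a scalar depending on $\bm{z}$. The key observation is that although $c(\bm{z})$ is a nonlinear function of $\bm{z}$, every output vector is the sum of the vector $\bm{z}\in S$ and a scalar multiple of the single fixed vector $\bm{1}$. So no matter how complicated $c$ is, the output can never leave the set $S+\mathrm{span}\{\bm{1}\}$, and it is this set that I will use as the ambient space.

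First I would define the candidate containing space $V = S + \mathrm{span}\{\bm{1}\} = \{\bm{s}+t\bm{1} : \bm{s}\in S,\ t\in\bm{R}\}$. Since $S$ is a subspace and $\mathrm{span}\{\bm{1}\}$ is a subspace, their sum $V$ is again a subspace of $\bm{R}^M$, which is routine to verify from closure under addition and scalar multiplication. Then I would bound its dimension using the standard formula $\dim V = \dim S + \dim\mathrm{span}\{\bm{1}\} - \dim(S\cap\mathrm{span}\{\bm{1}\})$, which yields $\dim V \le d+1$ because $\dim S = d$ and $\dim\mathrm{span}\{\bm{1}\} = 1$.

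Next I would establish the containment itself. For any $\bm{z}\in S$, \req{logsoft} exhibits $\mathrm{log}(\bm{f}_s(\bm{z}))$ as $\bm{z} + (-c(\bm{z}))\bm{1}$, which is manifestly of the form $\bm{s}+t\bm{1}$ with $\bm{s}=\bm{z}\in S$ and $t=-c(\bm{z})\in\bm{R}$; hence $\mathrm{log}(\bm{f}_s(\bm{z}))\in V$. Taking the union over all $\bm{z}\in S$ gives $\{\mathrm{log}(\bm{f}_s(\bm{z})) : \bm{z}\in S\}\subseteq V$. If $\dim V = d+1$ the conclusion is immediate, and in the degenerate case $\dim V = d$ (which occurs exactly when $\bm{1}\in S$) I would simply embed $V$ in any $(d+1)$-dimensional superspace to match the stated conclusion.

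I do not expect any genuine obstacle: once \req{logsoft} is in hand the argument is pure linear algebra, and the only point requiring a moment's care is that the range is \emph{not} itself a subspace, since the nonlinear term $c$ destroys linearity of the map $\bm{z}\mapsto\mathrm{log}(\bm{f}_s(\bm{z}))$; this is why the theorem must assert containment in a subspace rather than equality with one. The conceptual content is not the inequality $\dim V\le d+1$ but its interpretation: the range of log-softmax is pinned inside a space whose dimension is governed by $d$ rather than by the vocabulary size $M$, which is precisely the mechanism underlying the softmax bottleneck of Theorem~\ref{sb1}.
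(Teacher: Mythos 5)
Your proof is correct and follows essentially the same route as the paper: the paper also writes $\mathrm{log}(\bm{f}_s(\bm{z}))=\bm{z}-c(\bm{z})\bm{1}$ and observes that every output lies in the span of a basis of $S$ together with $\bm{1}$, a space of dimension $d$ or $d+1$ depending on whether $\bm{1}\in S$ --- your $V=S+\mathrm{span}\{\bm{1}\}$ is exactly that space. The only cosmetic difference is that you invoke the sum-of-subspaces dimension formula where the paper writes out the linear combination explicitly.
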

\begin{proof}
The input of log-softmax $\bm{z}=\bm{W}\bm{h}$ can be represented by $d$ singular vectors of $\bm{W}$
since the rank of $\bm{W}$ is $d$.
In other words, the space of input vectors $\bm{z}$ is spanned by $d$ basis vectors.
Thus, the input vector space $\{\bm{z}|\bm{z}\in S\}$ is represented as $\{\sum_{l=1}^d k^{(l)}\bm{u}^{(l)} |k^{(l)} \in \bm{R} \}$ where
$\bm{u}^{(l)}\in\bm{R}^M$ for $l=1,\dots, d$ are linearly independent vectors and $k^{(l)}$ are their coefficients.
 From \req{logsoft}, by using $\bm{u}^{(l)}$ and $k^{(l)}$,
the range of log-softmax $\{\mathrm{log}(\bm{f}_s(\bm{z}))|\bm{z} \in S\}$ becomes 
\begin{align}
\textstyle
\left\{\mathrm{log}(\bm{f}_s(\bm{z}))\right|\bm{z} \in S\}=\{\sum_{l=1}^d k^{(l)}\bm{u}^{(l)}-c(\sum_{l=1}^d k^{(l)}\bm{u}^{(l)})\bm{1}|k^{(l)}\in \bm{R}\},
\end{align} 
where $c(\sum_{l=1}^d k^{(l)}\bm{u}^{(l)})=\mathrm{log}(\sum_{m=1}^{M} \mathrm{exp}(\left[\sum_{l=1}^d k^{(l)}\bm{u}^{(l)} \right]_m))$.
This is the linear combination of $d$ linearly independent vectors $\bm{u}^{(l)}$ and $\bm{1}$.
Therefore, we have the following relation:
\begin{align}
\textstyle
\{\sum_{l=1}^d k^{(l)}\bm{u}^{(l)}-c(\sum_{l=1}^d k^{(l)}\bm{u}^{(l)})\bm{1}|k^{(l)}\in \bm{R}\}\subseteq\{\sum_{l=1}^d k^{(l)}\bm{u}^{(l)}+k^{(d+1)}\bm{1}|k^{(l)}\in \bm{R}\},
\label{hougan}
\end{align}
where $\{\sum_{l=1}^d k^{(l)}\bm{u}^{(l)}+k^{(d+1)}\bm{1}|k^{(l)}\in \bm{R}\}$ is the vector space spanned by $\bm{u}^{(l)}$ and $\bm{1}$.
Let $Y$ be the vector space $\{\sum_{l=1}^d k^{(l)}\bm{u}^{(l)}+k^{(d+1)}\bm{1}|k^{(l)}\in \bm{R}\}$, the dimension of $Y$ becomes
\begin{align}
\textstyle
\mathrm{dim}(Y)=\begin{cases}
\textstyle d+1~~~\mathrm{if}~\bm{1}\notin \{\sum_{l=1}^d k^{(l)}\bm{u}^{(l)} |k^{(l)} \in \bm{R} \}, \\
\textstyle d~~~~~~~~~~\mathrm{if}~\bm{1}\in \{\sum_{l=1}^d k^{(l)}\bm{u}^{(l)} |k^{(l)}\in \bm{R} \} .
\end{cases}
\label{dimY}
\end{align}
We can see that $Y$ is the $d$ or $d+1$ dimensional linear subspace of $\bm{R}^M$.
From Eqs. (\ref{hougan}) and (\ref{dimY}), output vectors of log-softmax exist in the $d+1$ dimensional vector space, 
which completes the proof.
\end{proof}

Theorem \ref{sb2} shows that the log-softmax has at most $d+1$ linearly independent output vectors,
even if the various inputs are applied to the model.
Therefore, if the vectors of true log probabilities $\mathrm{log}P^*(\bm{y}|\bm{x})$
have more than $d+1$ linearly independent vectors, the softmax-based model cannot completely represent the true probabilities.
We can prove Theorem \ref{sb1} by using Theorem \ref{sb2} as follows:
\begin{proof}
If we have $d<\mathrm{rank}(\bm{A})-1$, i.e., $\mathrm{rank}(\bm{A})>d+1$, 
 the number of linearly independent vectors of $\mathrm{log}P^*(\bm{y}|\bm{x})$ is larger than $d+1$.
On the other hand, the output vectors $\mathrm{log}P_{\bm{\theta}}(\bm{y}|\bm{x})$
 of the model cannot be larger than $d+1$ linearly independent vectors from Theorem \ref{sb2}.
Therefore, the softmax-based model
cannot completely learn $P^*(\bm{y}|\bm{x})$, i.e., there exists a context $x$ in $\mathcal{L}$ such that $P_{\bm{\theta}}(Y|x)\neq P^{*} (Y|x)$. 
\end{proof}
The above analysis shows that the softmax bottleneck occurs because 
the output of log-softmax is the linear combination of the input $\bm{z}$ and vector $\bm{1}$ as \req{logsoft}.
Linear combination of the input and vector $\bm{1}$ increases the number of linearly independent vectors by at most one,
and as a result, the output vectors become at most $d+1$ linearly independent vectors.
The reason log-softmax becomes the linear combination is that the logarithm of the exponential function $\mathrm{log}(\mathrm{exp}(z))$ is $z$.

By contrast, the number of linearly independent output vectors of a nonlinear function
can be much greater than the number of linearly independent input vectors.
Therefore, if the other nonlinear functions are replaced with exponential functions,
the logarithm of such functions can be nonlinear and
the softmax bottleneck can be broken without additional parameters.

Our analysis provides new insights that the range of log-softmax is a subset of the less dimensional vector space
 although the dimension of a vector space is strongly related to the rank of a matrix.
Furthermore, our analysis explicitly shows the cause of the softmax bottleneck.
\subsection{Alternative functions to softmax and desirable properties}
\label{ppsec}
In the previous section, we explained that the softmax bottleneck can be broken by replacing nonlinear functions with exponential functions.
In this section, we explain the desirable properties of an alternative function to softmax.
We formulate a new output function $\bm{f}(\cdot)$ as follows:
\begin{align}
\textstyle
\left[\bm{f}(\bm{z})\right]_i=\frac{\left[\bm{g}(\bm{z})\right]_i}{\sum_{m=1}^M \left[\bm{g}(\bm{z})\right]_m}.
\label{altsoft}
\end{align}
The new function is composed of the nonlinear function $\bm{g}(\bm{z})$ and
the division for the normalization, so that the summation of the elements is one.
As the alternative function to softmax, 
a new output function $\bm{f}(\bm{z})$ and its $\bm{g}(\bm{z})$ should have all of the following properties:
\begin{description}
 \item[Nonlinearity of $\mathrm{log}(\bm{g}(\bm{z}))$]
 As mentioned in Secs.~\ref{sbsec1} and \ref{sbsec2}, softmax can be the bottleneck of the representational power because
 $\mathrm{log}(\mathrm{exp}(\bm{z}))$ is $\bm{z}$.
Provided that $\mathrm{log}(\bm{g}(\bm{z}))$ is a linear function, 
$\{\mathrm{log}(\bm{f}(\bm{z}))|\bm{z} \in S\}$ is a subset of the $d+1$ dimensional vector space.
In order to break the softmax bottleneck, $\log(\bm{g}(\bm{z}))$ should be nonlinear.
\item[Numerically stable]
In training of deep learning, we need to calculate the gradient for optimization.
The derivative of logarithm of $\left[\bm{f}(\bm{z})\right]_i $ with respect to $z_j$ is
\begin{align}
\textstyle
\frac{\partial\mathrm{log}(\left[\bm{f}(\bm{z})\right]_i)}{\partial z_j}=
\frac{1}{\left[\bm{f}(\bm{z})\right]_i}\frac{\partial \left[\bm{f}(\bm{z})\right]_i}{\partial z_j}.
\end{align}
We can see that this function has a division by $\left[\bm{f}(\bm{z})\right]_i$. It can cause a division by zero since
$\left[\bm{f}(\bm{z})\right]_i$ can be close to zero if networks completely go wrong in training.
The alternative functions should avoid a division by zero similar to softmax as shown in \req{GradSoft}.
\item[Non-negative]
In \req{altsoft},  
all elements of $\bm{g}(\bm{z})$ should be non-negative to limit output in $[0,1]$. 
Therefore, $\bm{g}(\bm{z})$ should be non-negative: $[\bm{g}(\bm{z})]_i\geq 0$.
Note that if $\bm{g}(\bm{z})$ is non-positive, $\bm{f}(\bm{z})$ also are limited to $[0,1]$. We only mention non-negative since non-positive function $\bm{g}(\bm{z})$ can easily be non-negative as $-\bm{g}(\bm{z})$.
\item[Monotonically increasing]
$\bm{g}(\bm{z})$ should be monotonically increasing so that $\bm{f}(\bm{z})$ becomes a smoothed version of the argmax function \cite{softmax,bishop}.
If $\bm{g}(\bm{z})$ is monotonically increasing, we can obtain the label that has the maximum value of $\bm{f}(\bm{z})$
by comparing elements of $\bm{z}$.
\end{description}

Note that, if we use ReLU 
 as $\bm{g}(\bm{z})$, 
the ReLU-based function $\bm{f}(\bm{z})$ does not have all the above properties since the gradient of its logarithm
is not numerically stable. 
If we use sigmoid as $\bm{g}(\bm{z})$, the new sigmoid-based function satisfies the above properties.
However, the output of sigmoid is bounded above as $[\bm{g}(\bm{z})]_i \leq 1$, and this restriction might limit the representational power.  
In fact, the sigmoid-based function does not outperform softmax on the large dataset
in  Sec.~\ref{experiment}.
We discuss these functions in detail in the appendix.
In the next section, we propose a new output activation function that can break the softmax bottleneck, and satisfies all the above properties. 
\subsection{Sigsoftmax}
For breaking the softmax bottleneck, we propose sigsoftmax given as follows:
\begin{definition}
Sigsoftmax is defined as
\begin{align}
\textstyle
\left[ \bm{f}(\bm{z})\right]_i&
\textstyle
=\frac{\mathrm{exp}({z_i})\sigma({z_i})}{\sum_{m=1}^M \mathrm{exp}({z_m})\sigma({z_m})},
\label{sseq}
\end{align}
where $\sigma(\cdot)$ represents a sigmoid function.
\label{defss}
\end{definition}
We theoretically show that sigsoftmax can break the softmax bottleneck and has the desired properties.
In the same way as in the analysis of softmax in Sec.~\ref{sbsec2}, we examine the range of log-sigsoftmax.
Since we have $\mathrm{log}(\sigma(z))\!=\!\mathrm{log}(\frac{1}{1+\mathrm{exp}(-z)})\!=\!z\!-\!\mathrm{log}(1\!+\!\mathrm{exp}(z))$, log-sigsoftmax becomes 
\begin{align}
\textstyle
\mathrm{log}(\bm{f}(\bm{z}))=2\bm{z}-\mathrm{log}(\bm{1}+\mathrm{exp}(\bm{z}))+c'(\bm{z})\bm{1},
\end{align}
where $c'(\bm{z})=\mathrm{log}(\sum_{m=1}^{M} \mathrm{exp}(z_m)\sigma(z_m))$, and
$\mathrm{log}(\bm{1}+\mathrm{exp}(\bm{z}))$ is the nonlinear function called softplus \cite{goodfellow2016deep}.
Since log-sigsoftmax is composed of a nonlinear function, 
its output vectors can be greater than $d+1$ linearly independent vectors.
Therefore, we have the following theorem:
\begin{theorem}
Let $S\subseteq \bm{R}^M$ be the $d$ dimensional vector space and $\bm{z}\in S$ be input of log-sigsoftmax,
some range of log-sigsoftmax $\{\mathrm{log}(\bm{f}(\bm{z}))|\bm{z} \in S\}$ is not a subset of a $d+1$ dimensional vector space.
\end{theorem}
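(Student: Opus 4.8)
The plan is to prove the statement by exhibiting a single $d$-dimensional subspace $S$ together with $d+2$ input vectors in $S$ whose log-sigsoftmax images are linearly independent. Any vector space that contains the range must contain the linear span of every finite subset of the range, so producing $d+2$ linearly independent images forces the span of the range to have dimension at least $d+2$, and hence the range cannot be a subset of any $d+1$ dimensional vector space. This is exactly the negation of the softmax situation in Theorem \ref{sb2}, where every image was pinned inside the $\le d+1$ dimensional space spanned by the $\bm{u}^{(l)}$ and $\bm{1}$. Note that the claim only makes sense, and is only being asserted, in the regime $M\geq d+2$, which is the relevant one since $M$ is much larger than $d$.

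First I would start from the decomposition established just above the theorem, $\mathrm{log}(\bm{f}(\bm{z}))=2\bm{z}-\mathrm{sp}(\bm{z})+c'(\bm{z})\bm{1}$, where I write $\mathrm{sp}(\bm{z})=\mathrm{log}(\bm{1}+\mathrm{exp}(\bm{z}))$ for the componentwise softplus vector. The term $2\bm{z}$ always stays in $S$ and the term $c'(\bm{z})\bm{1}$ always stays in $\mathrm{span}(\bm{1})$, so together they can only contribute directions already contained in $S+\mathrm{span}(\bm{1})$, a space of dimension at most $d+1$. Consequently the only mechanism by which the range can escape a $d+1$ dimensional space is the nonlinear term $-\mathrm{sp}(\bm{z})$, and the heart of the construction is to choose $S$ and the sample points so that the softplus contributions genuinely leave $S+\mathrm{span}(\bm{1})$. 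A convenient normalization is to include $\bm{z}^{(0)}=\bm{0}$ among the inputs, since then $\mathrm{log}(\bm{f}(\bm{0}))$ is a scalar multiple of $\bm{1}$; this places $\bm{1}$ in the span of the images for free, after which it suffices to show that the remaining $d+1$ images are linearly independent modulo $\mathrm{span}(\bm{1})$ (which conveniently erases the harmless $c'(\bm{z})\bm{1}$ terms).

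Concretely I would take $S$ to be spanned by vectors whose $M$ entries are in sufficiently general position (for instance, so that a chosen direction has pairwise distinct coordinates $v_1,\dots,v_M$), and evaluate $\mathrm{log}(\bm{f})$ at $\bm{0}$, at $d$ points realizing a basis of $S$, and at one additional point chosen to probe the curvature of softplus. Working in the quotient $\bm{R}^M/\mathrm{span}(\bm{1})$, the independence claim reduces to showing that the $d+1$ vectors $2\bm{z}^{(j)}-\mathrm{sp}(\bm{z}^{(j)})$ are linearly independent there; since the image of $S$ is only $d$ dimensional, the softplus vectors must supply the extra dimension. This is where the nonlinearity does the work: each coordinate of $\mathrm{sp}$ along a ray is the strictly convex, non-affine map $t\mapsto\mathrm{log}(1+\mathrm{exp}(t\,v_m))$ with a distinct rate $v_m$, so the softplus contribution cannot be absorbed into the affine family generated by the input directions and $\bm{1}$.

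The step I expect to be the main obstacle is making this last point rigorous rather than heuristic, i.e. actually certifying that the $d+2$ images are linearly independent and ruling out accidental cancellations among the $2\bm{z}$, $-\mathrm{sp}(\bm{z})$ and $c'(\bm{z})\bm{1}$ terms. I would discharge it in one of two ways: either by selecting $d+2$ coordinates $m$ with distinct $v_m$, forming the corresponding $(d+2)\times(d+2)$ submatrix of the stacked images, and showing its determinant is nonzero for a generic choice of sample points; or by invoking the linear independence of the analytic functions $t\mapsto\mathrm{log}(1+\mathrm{exp}(t\,v_m))$ for distinct $v_m$ and transferring it, through evaluation at suitably chosen points, to the output vectors. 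A Vandermonde-type or analytic non-degeneracy argument then excludes the measure-zero set of bad configurations and completes the proof.
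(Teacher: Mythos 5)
Your overall strategy coincides with the paper's: start from the decomposition $\mathrm{log}(\bm{f}(\bm{z}))=2\bm{z}-\mathrm{log}(\bm{1}+\mathrm{exp}(\bm{z}))+c'(\bm{z})\bm{1}$, observe that the terms $2\bm{z}$ and $c'(\bm{z})\bm{1}$ can never leave $S+\mathrm{span}(\bm{1})$ so the softplus term must supply the escape, and then exhibit more than $d+1$ linearly independent output vectors. The genuine gap is that you never actually establish the linear independence on which everything rests. You explicitly defer ``certifying that the $d+2$ images are linearly independent'' to one of two sketched devices (a generic-determinant argument, or linear independence of softplus profiles transferred through evaluation), and neither is carried out; both require real work, e.g.\ showing the relevant determinant is not identically zero as a function of the sample points, or proving that $1$, $v$, and $v\mapsto\mathrm{log}(1+e^{t_j v})$ for distinct nonzero $t_j$ are linearly independent as functions and that evaluation at finitely many coordinates preserves this. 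Note also that in your second device the index roles are swapped: the linear combination ranges over the sample points $t_j$ while the evaluation ranges over the coordinates $v_m$, so the family whose independence you need is indexed by $t_j$, not by $v_m$. Since this verification is the entire mathematical content of the theorem, the proposal as written is a plan rather than a proof.

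The paper closes exactly this gap in the cheapest possible way: because the claim is existential in $S$, a single concrete example suffices, and the paper takes $d=1$, $M=3$, $\bm{u}=[1,2,0]^T$ and the three inputs $\bm{0}$, $\bm{u}$, $-\bm{u}$ (your idea of including $\bm{z}=\bm{0}$ so that one image is a nonzero multiple of $\bm{1}$ is precisely what the paper does). It then solves the homogeneous system $\alpha_1\mathrm{log}(\bm{f}(\bm{z}_1))+\alpha_2\mathrm{log}(\bm{f}(\bm{z}_2))+\alpha_3\mathrm{log}(\bm{f}(\bm{z}_3))=\bm{0}$ by direct elimination, reducing the whole question to checking that one explicit real number is nonzero. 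If you want to complete your route, the shortest fix is the same: pick explicit numbers and verify the $3\times 3$ system; the general-$d$ genericity machinery is unnecessary for the statement being proved and is where your argument would be hardest to make rigorous.
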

The detailed proof of this theorem is given in the appendix.
Theorem \ref{brsb} shows that sigsoftmax can break the softmax bottleneck;
even if the vectors of the true log probabilities are more than $d+1$ linearly independent vectors, 
the sigsoftmax-based model can learn the true probabilities.

However, the representational powers of sigsoftmax and softmax are difficult to compare only by using the theorem based on the vector space.
This is because both functions are nonlinear and their ranges are not necessarily vector spaces,
even though they are subsets of vector spaces.
Therefore, we directly compare the ranges of sigsoftmax and softmax as the following theorem:
\begin{theorem}
\label{rangetheorem}
Let $\bm{z}\in S$ be the input of sigsoftmax $\bm{f}(\cdot)$ and softmax $\bm{f}_s(\cdot)$. If the $S$ is a
$d$ dimensional vector space and $\bm{1}\in S$,
the range of softmax is a subset of the range of sigsoftmax 
\begin{align}
\textstyle
\{\bm{f}_s(\bm{z})|\bm{z}\in S\}\subseteq\{\bm{f}(\bm{z})|\bm{z}\in S\}.
\end{align}
\end{theorem}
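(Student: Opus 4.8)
The plan is to realize every softmax output as a sigsoftmax output at a suitably chosen input from $S$. First I would reduce the set inclusion to a statement about unnormalized scores: since both $\bm{f}_s$ and $\bm{f}$ divide by the sum of their coordinates, we have $\bm{f}_s(\bm{z})=\bm{f}(\bm{z}')$ if and only if the unnormalized scores are proportional, i.e., there exists $\mu>0$ with $\mathrm{exp}(z_i')\sigma(z_i')=\mu\,\mathrm{exp}(z_i)$ for every $i$. Thus, given $\bm{z}\in S$, the task becomes finding $\bm{z}'\in S$ and $\mu>0$ solving this proportionality system.

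Next I would exploit the hypotheses that $\bm{1}\in S$ and that $S$ is a vector space: for any $c\in\bm{R}$, the shifted vector $\bm{z}+c\bm{1}$ remains in $S$. Evaluating sigsoftmax there, the common factor $\mathrm{exp}(c)$ cancels between numerator and denominator, leaving
\begin{align}
\textstyle
[\bm{f}(\bm{z}+c\bm{1})]_i=\frac{\mathrm{exp}(z_i)\sigma(z_i+c)}{\sum_{m=1}^M \mathrm{exp}(z_m)\sigma(z_m+c)}.
\end{align}
Because $\sigma(z_i+c)\to 1$ as $c\to\infty$ for every coordinate, this converges to $[\bm{f}_s(\bm{z})]_i$. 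This already exhibits each softmax output as a limit of sigsoftmax outputs taken entirely inside $S$, which is the conceptual heart of the inclusion and explains why the hypothesis $\bm{1}\in S$ is essential.

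The main obstacle is that the previous step yields equality only in the limit: for finite $c$ the factors $\sigma(z_i+c)$ differ across coordinates, so $\bm{f}(\bm{z}+c\bm{1})$ merely approximates $\bm{f}_s(\bm{z})$. To upgrade this to genuine set containment I would study the scalar map $g(t)=\mathrm{exp}(t)\sigma(t)$, verify it is a strictly increasing bijection from $\bm{R}$ onto $(0,\infty)$ (its derivative is $\mathrm{exp}(t)\sigma(t)(2-\sigma(t))>0$), and invert the proportionality system coordinatewise as $z_i'=g^{-1}(\mu\,\mathrm{exp}(z_i))$. This leaves one free scalar $\mu$ against the requirement $\bm{z}'\in S$, so the crux is to produce a feasible $\mu$. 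Here I expect to use that $g^{-1}(v)=\mathrm{log}(v)+o(1)$ as $v\to\infty$, so that $\bm{z}'(\mu)$ tends to $\mathrm{log}(\mu)\bm{1}+\mathrm{log}(\bm{f}_s(\bm{z}))$, a vector lying in $S$ because $\bm{1}\in S$ and $\mathrm{log}(\bm{f}_s(\bm{z}))=\bm{z}-\mathrm{log}(\sum_{m=1}^M\mathrm{exp}(z_m))\bm{1}\in S$. Reconciling this limiting membership with an exact finite witness—or, failing that, formulating the inclusion in terms of the closure of the sigsoftmax range—is the delicate point I would treat carefully, since it is the only step that is not purely algebraic.
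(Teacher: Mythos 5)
Your strategy is the same as the paper's: decompose $S$ as the span of $\bm{1}$ together with $d-1$ further vectors, note that softmax is invariant under shifts along $\bm{1}$, and observe that $\bm{f}(\bm{z}+c\bm{1})\to\bm{f}_s(\bm{z})$ as $c\to+\infty$ because every factor $\sigma(z_i+c)$ tends to $1$. Where you and the paper part ways is exactly at the step you call the delicate point: the paper simply declares the limit to be attained on ``a hyperplane $S'$ of $S$ with $k'^{(d)}=+\infty$'' and concludes the inclusion, whereas you refuse to treat $+\infty$ as an admissible coefficient. Your hesitation is justified, and your reduction to the proportionality system $\mathrm{exp}(z_i')\sigma(z_i')=\mu\,\mathrm{exp}(z_i)$ (together with the observation that $t\mapsto\mathrm{exp}(t)\sigma(t)$ is an increasing bijection onto $(0,\infty)$) is precisely the right lens for seeing why: the exact containment can genuinely fail, so no choice of finite witness exists in general.

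Concretely, take $M=3$, $d=2$, $S=\{a\bm{1}+b\bm{u}\,|\,a,b\in\bm{R}\}$ with $\bm{u}=(1,-1,0)^T$, and set $\phi(t)=\mathrm{log}(\mathrm{exp}(t)\sigma(t))=2t-\mathrm{log}(1+\mathrm{exp}(t))$. For $\bm{z}=t\bm{u}$ with $t\neq 0$, a point $\bm{z}'=a\bm{1}+b\bm{u}\in S$ satisfies $\bm{f}(\bm{z}')=\bm{f}_s(\bm{z})$ iff
\begin{align}
\textstyle
\phi(a+b)-\phi(a)=t\quad\text{and}\quad\phi(a-b)-\phi(a)=-t,
\end{align}
which forces $\phi(a+b)+\phi(a-b)-2\phi(a)=0$. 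But $\phi''(t)=-\sigma(t)(1-\sigma(t))<0$, so $\phi$ is strictly concave and the left-hand side is strictly negative unless $b=0$, which in turn forces $t=0$. Hence $\bm{f}_s(\bm{u})$ lies outside $\{\bm{f}(\bm{z})\,|\,\bm{z}\in S\}$ even though $\bm{1}\in S$. The salvageable statement is the one you propose as a fallback, $\{\bm{f}_s(\bm{z})\,|\,\bm{z}\in S\}\subseteq\overline{\{\bm{f}(\bm{z})\,|\,\bm{z}\in S\}}$, and your limit computation already proves it; read literally, the paper's own argument establishes no more than this either. So do not spend effort trying to ``reconcile the limiting membership with an exact finite witness''---state and prove the closure version instead.
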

\begin{proof}
If we have $\bm{1}\in S$, $S$ can be 
 written as $S=\{\sum_{l=1}^{d-1}k'^{(l)}\bm{u}'^{(l)}+k'^{(d)}\bm{1}|k'^{(l)}\in R\}$ where $\bm{u}'^{(l)}$ ($l=1,\dots,d-1$) and $\bm{1}$ are linearly independent vectors. 
In addition, the arbitrary elements of $S$ can be written as $\sum_{l=1}^{d-1}k'^{(l)}\bm{u}'^{(l)}+k'^{(d)}\bm{1}$, and thus, $\bm{z}=\sum_{l=1}^{d-1}k'^{(l)}\bm{u}'^{(l)}+k'^{(d)}\bm{1}$. 
For the output of softmax, by substituting $\bm{z}=\sum_{l=1}^{d-1}k'^{(l)}\bm{u}'^{(l)}+k'^{(d)}\bm{1}$ for \req{sbeq}, we have  
\begin{align}
\textstyle
\left[\bm{f}_s(\bm{z})\right]_i=\frac{\mathrm{exp}(\left[\sum_{l=1}^{d-1}k'^{(l)}\bm{u}'^{(l)}\right]_i+k'^{(d)})}{\sum_{m=1}^M \mathrm{exp}(\left[\sum_{l=1}^{d-1}k'^{(l)}\bm{u}'^{(l)}\right]_m+k'^{(d)})}
&\textstyle =\frac{\mathrm{exp}(\left[\sum_{l=1}^{d-1}k'^{(l)}\bm{u}'^{(l)}\right]_i)}{\sum_{m=1}^M \mathrm{exp}(\left[\sum_{l=1}^{d-1}k'^{(l)}\bm{u}'^{(l)}\right]_m)}.
\label{softout1}
\end{align}
As a result, the range of softmax becomes as follows:
\begin{align}
\textstyle
\left\{\bm{f}_s(\sum_{l=1}^{d-1}k'^{(l)}\bm{u}'^{(l)}+k'^{(d)}\bm{1})|k'^{(l)}\in \bm{R}\right\}
&\textstyle =\left\{\frac{\mathrm{exp}(\left[\sum_{l=1}^{d-1}k'^{(l)}\bm{u}'^{(l)}\right]_i)}{\sum_{m=1}^M \mathrm{exp}(\left[\sum_{l=1}^{d-1}k'^{(l)}\bm{u}'^{(l)}\right]_m)}
| k'^{(l)}\in \bm{R}
\right\}.
\label{softrange}
\end{align}
On the other hand, by substituting $\bm{z}=\sum_{l=1}^{d-1}k'^{(l)}\bm{u}'^{(l)}+k'^{(d)}\bm{1}$ for \req{sseq}, output of sigsoftmax becomes as follows:
\begin{align}
\textstyle
\left[\bm{f}(\bm{z})\right]_i
&\textstyle
=\frac{\mathrm{exp}(\left[\sum_{l=1}^{d-1}k'^{(l)}\bm{u}'^{(l)}\right]_i)\sigma(\left[\sum_{l=1}^{d-1}k'^{(l)}\bm{u}'^{(l)}\right]_i+k'^{(d)})}
{\sum_{m=1}^M \mathrm{exp}(\left[\sum_{l=1}^{d-1}k'^{(l)}\bm{u}'^{(l)}\right]_m)\sigma(\left[\sum_{l=1}^{d-1}k'^{(l)}\bm{u}'^{(l)}\right]_m+k'^{(d)})}.
\label{sigsoftout1}
\end{align}
When $k'^{(l)}$ are fixed for $l=1,\dots, d-1$ and $k'^{(d)}\rightarrow+\infty$,\footnote{
Even though $k'^{(d)}$ is extremely large, 
the input vector is the element of the input space $S$.
}
 we have the following equality:
\begin{align}
\lim_{k'^{(d)}\!\rightarrow \!+\infty}\textstyle\!\frac{\mathrm{exp}(\left[\sum_{l=1}^{d-1}k'^{(l)}\bm{u}'^{(l)}\right]_i)\sigma(\left[\sum_{l=1}^{d-1}k'^{(l)}\bm{u}'^{(l)}\right]_i+k'^{(d)})}
{\sum_{m=1}^M
\mathrm{exp}(\left[\!\sum_{l=1}^{d-1}\!k'^{(l)}\bm{u}'^{(l)}\right]_m\!)\sigma(\left[\sum_{l=1}^{d-1}\!k'^{(l)}\bm{u}'^{(l)}\right]_m+k'^{(d)})}
\!=\!\frac{\mathrm{exp}(\left[\sum_{l=1}^{d-1}k'^{(l)}\bm{u}'^{(l)}\right]_i)}{\sum_{m=1}^M \mathrm{exp}
(\left[\sum_{l=1}^{d-1}k'^{(l)}\bm{u}'^{(l)}\right]_m)}\!,
\label{sigsoftout12}
\end{align}
since $\lim_{k\!\rightarrow\!+\infty}\!\sigma(v\!+\!k\!)\!=\!1$ when $v$ is fixed. 
From \req{sigsoftout12}, sigsoftmax has the following relation:
\begin{align}
\textstyle
\left\{
\frac{\mathrm{exp}(\left[\sum_{l=1}^{d-1}k'^{(l)}\bm{u}'^{(l)}\right]_i)}{\sum_{m=1}^M \mathrm{exp}
(\left[\sum_{l=1}^{d-1}k'^{(l)}\bm{u}'^{(l)}\right]_m)}|k'^{(l)}\in \bm{R}\right\}
=
\left\{
\bm{f}(\bm{z})|\bm{z}\in S'\right\}
\subseteq \left\{
\bm{f}(\bm{z})|\bm{z}\in S\right\},
\label{sigsofrange}
\end{align}
where $S'$ is a hyperplane of $S$ with $k'^{(d)}=+\infty$, $S'=\{\sum_{l=1}^{d-1}k'^{(l)}\bm{u}'^{(l)}+k'^{(d)}\bm{1}|k'^{(l)}\!\in\!\bm{R}~ \mathrm{for}~l=1,\dots,d-1 , k'^{(d)}=+\infty \}\subset S$.
From Eqs. (\ref{softrange}) and (\ref{sigsofrange}), we can see that the range of sigsoftmax includes the range of softmax.
Therefore, we have 
$\{\bm{f}_s(\bm{z})|\bm{z}\in S\}\subseteq\{\bm{f}(\bm{z})|\bm{z}\in S\}$.
\end{proof}
Theorem \ref{rangetheorem} shows that the range of sigsoftmax can be larger than that of softmax
if $\bm{1}\in S$.
The assumption $\bm{1}\in S$ means that there exist inputs of which outputs are the equal probabilities for all labels
 as $p_{\bm{\theta}}(y_i|\bm{x})=\frac{1}{M}$ for all $i$. This assumption is not very strong in practice. 
If $\bm{1}\notin S$, the range of sigsoftmax can include the range of softmax
by introducing one learnable scalar parameter $b$ into sigsoftmax as
 $\left[ \bm{f}(\bm{z}+b\bm{1})\right]_i=\frac{\mathrm{exp}({z_i})\sigma({z_i+b})}{\sum_{m=1}^M \mathrm{exp}({z_m})\sigma({z_m+b})}$.
In this case, if softmax can fit the true probability, $b$ can become large enough for sigsoftmax to approximately equal softmax.
In the experiments, we did not use $b$ in order to confirm that sigsoftmax can outperform softmax without additional parameters.
From Theorems \ref{brsb} and \ref{rangetheorem},
sigsoftmax can break the softmax bottleneck, and furthermore,
the representational power of sigsoftmax can be higher than that of softmax.

 Then, we show that sigsoftmax has the desirable properties introduced
in Sec.~\ref{ppsec} as shown in the following theorem 
from Definition \ref{defss} although we show its proof in the appendix:
\begin{theorem}
\label{SSPro}
Sigsoftmax has the following properties:
\begin{enumerate}
 \item Nonlinearity of $\mathrm{log}(\bm{g}(\bm{z}))$: $\mathrm{log}(\bm{g}(\bm{z}))=2\bm{z}-\mathrm{log}(\bm{1}+\mathrm{exp}(\bm{z}))$.
\item Numerically stable:
$\textstyle 
\frac{\partial \mathrm{log}\left[\bm{f}(\bm{z})\right]_i}{\partial z_j}=
\begin{cases}
\textstyle (1-\left[\bm{f}(\bm{z})\right]_j)(2-\sigma(z_j))~~~&i=j,\\
\textstyle -\left[\bm{f}(\bm{z})\right]_j(2-\sigma(z_j))~~~&i\neq j.
\end{cases}$
\item Non-negative: $[\bm{g}(\bm{z})]_i=\mathrm{exp}(z_i)\sigma(z_i)\geq 0$.
\item Monotonically increasing: $z_1\leq z_2 \Rightarrow \mathrm{exp}(z_1)\sigma(z_1)\leq \mathrm{exp}(z_2)\sigma(z_2)$.
\end{enumerate}
\end{theorem}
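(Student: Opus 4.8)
The plan is to verify the four properties one at a time, all flowing directly from the defining form $[\bm{g}(\bm{z})]_i = \mathrm{exp}(z_i)\sigma(z_i)$ in \req{sseq} together with two elementary facts about the sigmoid: $\mathrm{log}(\sigma(z)) = z - \mathrm{log}(1+\mathrm{exp}(z))$ and $\sigma'(z) = \sigma(z)(1-\sigma(z))$. For the nonlinearity property I would simply take the logarithm of $[\bm{g}(\bm{z})]_i$: since $\mathrm{log}(\mathrm{exp}(z_i)\sigma(z_i)) = z_i + \mathrm{log}(\sigma(z_i))$, substituting the log-sigmoid identity gives $2z_i - \mathrm{log}(1+\mathrm{exp}(z_i))$, the softplus expression already identified as nonlinear. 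The non-negativity property is immediate, because $\mathrm{exp}(z_i)>0$ and $\sigma(z_i)\in(0,1)$, so their product is strictly positive.

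For monotonicity I would differentiate $g(z)=\mathrm{exp}(z)\sigma(z)$: the product rule together with $\sigma'=\sigma(1-\sigma)$ yields $g'(z) = \mathrm{exp}(z)\sigma(z)(2-\sigma(z))$. Since every factor is positive (in particular $2-\sigma(z)>1$ because $\sigma(z)<1$), $g$ is strictly increasing, which establishes property 4. This derivative is also the engine for property 2, since the ratio $g'(z)/g(z)$ collapses to the bounded factor $2-\sigma(z)$, and it is precisely this cancellation that removes the dangerous division.

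The main work is the gradient in property 2. I would write $\mathrm{log}[\bm{f}(\bm{z})]_i = \mathrm{log}(g(z_i)) - \mathrm{log}(\sum_{m=1}^M g(z_m))$ and differentiate with respect to $z_j$. The first term contributes $g'(z_i)/g(z_i) = 2-\sigma(z_i)$ when $j=i$ and zero otherwise; the shared normalizing denominator contributes $-g'(z_j)/\sum_{m=1}^M g(z_m)$ in every case. Rewriting this last quantity as $[\bm{f}(\bm{z})]_j \, g'(z_j)/g(z_j) = [\bm{f}(\bm{z})]_j(2-\sigma(z_j))$, and then combining the two cases $j=i$ and $j\neq i$, reproduces the stated formula exactly.

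The point worth emphasizing in closing is numerical stability: unlike the naive expression $\frac{1}{[\bm{f}(\bm{z})]_i}\frac{\partial [\bm{f}(\bm{z})]_i}{\partial z_j}$ flagged in Sec.~\ref{ppsec}, the final form contains no division by $[\bm{f}(\bm{z})]_i$---its only nonpolynomial factors are $[\bm{f}(\bm{z})]_j\in[0,1]$ and $2-\sigma(z_j)\in(1,2)$, both bounded---so no division by zero can occur even when an output probability is driven close to zero in training. The only mild obstacle is the bookkeeping of which terms depend on $z_j$ when differentiating the common denominator, but this becomes routine once the identity $g'/g = 2-\sigma$ is in hand.
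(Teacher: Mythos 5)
Your proof is correct and follows essentially the same route as the paper's: take logarithms for property 1, factor the derivative of $\mathrm{exp}(z)\sigma(z)$ (your form $\mathrm{exp}(z)\sigma(z)(2-\sigma(z))$ equals the paper's $\frac{\mathrm{exp}(z)+2}{(1+\mathrm{exp}(-z))^2}$) for properties 2 and 4, and observe positivity for property 3. Your use of the identity $g'/g = 2-\sigma(z)$ after writing $\mathrm{log}[\bm{f}(\bm{z})]_i=\mathrm{log}(g(z_i))-\mathrm{log}(\sum_m g(z_m))$ is only a cosmetic reorganization of the paper's quotient-rule computation, so no substantive difference remains.
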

Since sigsoftmax is an alternative function to softmax, we can use the weighted sum of sigsoftmax functions
in the same way as MoS.
Mixture of sigsoftmax (MoSS) is the following function:
\begin{align}
\textstyle
P_{\bm{\theta}}(y_i|x)=\sum_{k=1}^{K}\pi(x,k)\frac{\mathrm{exp} (\left[\bm{W}\bm{h}(x,k)\right]_i)\sigma(\left[\bm{Wh}(x,k)\right]_i)}{\sum_{m=1}^{M}\mathrm{exp} (\left[\bm{W}\bm{h}(x,k)\right]_m)\sigma(\left[\bm{Wh}(x,k)\right]_m)}.
\end{align}
$\pi({x},k)$ is also composed of sigsoftmax as 
$\pi({x},k)=\frac{\mathrm{exp}(\bm{w}_{\pi,k}^T\bm{h}'(x))\sigma(\bm{w}_{\pi,k}^T\bm{h}'(x))}{\sum_{k'=1}^K\mathrm{exp}(\bm{w}_{\pi, k'}^T\bm{h}'(x))\sigma(\bm{w}_{\pi,k'}^T\bm{h}'(x))}$.
\section{Experiments}
\label{experiment}
\subsection{Experimental conditions}
To evaluate the effectiveness of sigsoftmax, we conducted experiments on language modeling.
We compared sigsoftmax with softmax, the ReLU-based function and the sigmoid-based function.
We also compared the mixture of sigsoftmax with that of softmax; MoSS with MoS.
We used Penn Treebank dataset (PTB) \cite{marcus1993building,tomas} and WikiText-2 dataset (WT2) \cite{WT2_2017}
by following the previous studies \cite{merity2018regularizing,krause2017dynamic,softmaxbottle}.
PTB is commonly used to evaluate the performance of RNN-based language modeling \cite{tomas,zaremba2014recurrent,merity2018regularizing,softmaxbottle}.
PTB is split into a training set (about 930~k tokens), validation set (about 74~k tokens),
and test set (about 82~k tokens).
The vocabulary size $M$ was set to 10~k, and all words outside the vocabulary were replaced with a special token.
WT2 is a collection of tokens from the set of articles on Wikipedia.
WT2 is also split into a training set (about 2100~k), validation set (about 220~k), and test set (about 250~k).
The vocabulary size $M$ was 33,278.
Since WT2 is larger than PTB,
language modeling of WT2 may require more representational power than that of PTB.

We trained a three-layer long short-term memory (LSTM) model with each output function.
After we trained models, we finetuned them and applied the dynamic evaluation \cite{krause2017dynamic}.
For fair comparison, the experimental conditions, such as unit sizes, dropout rates, initialization, 
and the optimization method
were the same as in the previous studies \cite{merity2018regularizing,softmaxbottle, krause2017dynamic} except for the number of epochs
by using their codes.\footnote{\url{https://github.com/salesforce/awd-lstm-lm} (Note that \citet{merity2018regularizing} further tuned some hyper-parameters to obtain results better than those in the original paper in their code.); \\ \url{https://github.com/benkrause/dynamic-evaluation}; \url{https://github.com/zihangdai/mos}  }
We set the epochs to be twice as large as the original epochs used in \cite{merity2018regularizing}
since the losses did not converge in the original epochs.
In addition, we trained each model with various random seeds
 and evaluated the average and standard deviation of validation and test perplexities for each method. 
The detailed conditions and the results at training and finetuning steps are provided in the appendix.
\subsection{Experimental results}
Validation perplexities and test perplexities of PTB and WT2 modeling are listed in Tabs.~\ref{PTB results} and \ref{WT2 results}.
\rtab{PTB results} shows that the sigmoid-based function achieved the lowest perplexities among output activation functions on PTB.
However, the sigmoid-based function did not outperform softmax on WT2.
This is because sigmoid is bounded above by one, $\sigma(\cdot)\leq 1$, and it may restrict the representational power.
As a result, the sigmoid based function did not perform well on the large dataset.
On the other hand, sigsoftmax achieved lower perplexities than softmax on PTB
 and achieved the lowest perplexities on WT2.
 Furthermore, between mixture models, MoSS achieved lower perplexities than MoS.
 Even though we trained and finetuned models under the conditions
that are highly optimized for softmax and MoS in \cite{merity2018regularizing, softmaxbottle},
sigsoftmax and MoSS outperformed softmax and MoS, respectively. 
Therefore, we conclude that sigsoftmax outperforms softmax as an activation function.
\begin{table}[t]
\caption{Results of the language modeling experiment on PTB.}
\centering
\begin{tabular}{ccccccc}
\toprule
&{\small Softmax}&{\small $g$:ReLU}&{\small $g$: Sigmoid}&{\small Sigsoftmax}&{\small MoS}&{\small MoSS}\\ \cmidrule(r){1-5}\cmidrule(l){6-7}
{\small Validation}&{\small 51.2$\pm$0.5}&{\small (4.91$\pm$5)$\times 10^{3}$}&{\small \bf{49.2}$\pm$0.4 } &{\small 49.7$\pm$0.5 }&{\small 48.6$\pm$0.2}&{\small \bf{48.3}$\pm$0.1}\\
{\small Test}&{\small 50.5$\pm$0.5}&{\small (2.78$\pm$8)$\times 10^5$} &{\small \bf{48.9}$\pm$0.3}  &{\small 49.2$\pm$0.4} &{\small 48.0$\pm$0.1}&{\small \bf{47.7}$\pm$0.07} \\
\bottomrule
\end{tabular}
\label{PTB results}
\newline
\newline
\caption{Results of the language modeling experiment on WT2.}
\centering
\begin{tabular}{ccccccc}
\toprule
&{\small Softmax}&{\small $g$:ReLU}&{\small $g$:Sigmoid}&{\small Sigsoftmax}&{\small MoS}&{\small MoSS}\\ \cmidrule(r){1-5}\cmidrule(l){6-7}
{\small Validation}&{\small 45.3$\pm$0.2}&{\small (1.79$\pm$0.8)$\times 10^{3}$}&{\small 45.7$\pm$0.1 }  &{\small \bf{44.9$\pm$0.1}}&{\small 42.5$\pm$0.1}&{\small \bf{42.1$\pm$0.2}} \\
{\small Test}&{\small 43.3$\pm$0.1}&{\small (2.30$\pm$2)$\times 10^{4}$}&{\small 43.5$\pm$0.1  } &{\small \bf{42.9$\pm$0.1}}&{\small 40.8$\pm$0.03}&{\small \bf{40.3$\pm$0.2} }\\
\bottomrule
\end{tabular}
\label{WT2 results}
\newline
\newline
\caption{The number of linearly independent log-output vectors on test datasets: Ranks of $\hat{\bm{A}}$.}
\centering
\begin{tabular}{ccccccc}
\toprule
&{\small Softmax}&{\small $g$: ReLU}&{\small $g$: Sigmoid}&{\small Sigsoftmax}&{\small MoS}&{\small MoSS}\\ \cmidrule(r){1-5}\cmidrule(l){6-7}
{\small PTB}&{\small 402}&{\small 8243}&{\small 1304}  &{\small 4640}&{\small 9980}&{\small 9986} \\
{\small WT2}&{\small 402}&{\small 31400}&{\small 463 }&{\small 5465}&{\small 12093}&{\small 19834} \\
\bottomrule
\end{tabular}
\label{Rank Table}
\end{table}
\subsection{Evaluation of linear independence}
In this section, we evaluate linear independence of output vectors of each function.
First, we applied whole test data to the finetuned models and obtained 
log-output $\mathrm{log}(P_{\bm{\theta}}(\bm{y}_t |\bm{x}_t))$, e.g., log-softmax, at each time.
Next, we made the matrices $\hat{\bm{A}}$ as 
$\hat{\bm{A}}=[\mathrm{log}(P_{\bm{\theta}}(\bm{y}_1 |\bm{x}_1)),\dots,\mathrm{log}(P_{\bm{\theta}}(\bm{y}_T |\bm{x}_T))]\in \bm{R}^{M\times T}$
where $T$ is the number of tokens of test data. 
$M$ and $T$ were respectively 10,000 and 82,430 on the PTB test set
and 33,278 and 245,570 on the WT2 test set. 
Finally, we examined the rank of $\hat{\bm{A}}$ since the rank of the matrix is $N$
if the matrix is composed of $N$ linearly independent vectors.
Note that the numerical approaches for computing ranks have roundoff error,
 and we used the threshold used in \cite{press2007numerical,softmaxbottle} to detect the ranks.
The ranks of $\hat{\bm{A}}$ are listed in \rtab{Rank Table}.
The calculated singular values for detecting ranks are presented in the appendix.

We can see that log-softmax output vectors have 402 linearly independent vectors.
In the experiments, the number of hidden units is set to 400,
and we used a bias vector in the output layer. As a result, the dimension of the input space $S$ was at most 401, and
log-softmax output vectors are theoretically at most 402 linearly independent vectors from Theorem \ref{sb2}.
Therefore, we confirmed that the range of log-softmax is a subset of the $d+1$ dimensional vector space.
On the other hand, the number of linearly independent output vectors of sigsoftmax, ReLU and sigmoid-based functions are not bounded by 402.
Therefore, sigsoftmax, ReLU and sigmoid-based functions can break the softmax bottleneck.
The ranks of the ReLU-based function are larger than the other activation functions.
However, the ReLU-based function is numerically unstable as mentioned in Sec.~\ref{ppsec}.
As a result, it was not trained well as shown in Tabs.~\ref{PTB results} and \ref{WT2 results}.
MoSS has more linearly independent output vectors than MoS.
Therefore, MoSS may have more representational power than MoS.
\section{Conclusion}
In this paper, we investigated the range of log-softmax 
and identified the cause of the softmax bottleneck.
We proposed sigsoftmax, which can break the softmax bottleneck
 and has more representational power than softmax without additional parameters.
Experiments on language modeling demonstrated that sigsoftmax outperformed softmax.
Since sigsoftmax has the desirable properties for output activation functions,
it has the potential to replace softmax in many applications.
\bibliographystyle{plainnat}
 \bibliography{bif.bib}
 \appendix
\section*{Appendix}
\section{Proofs of theorems}
In this section, we provide the proofs of theorems that are not provided in the paper.
\setcounter{theorem}{2}
\begin{theorem}
Let $S\subseteq \bm{R}^M$ be the $d$ dimensional vector space and $\bm{z}\in S$ be input of log-sigsoftmax,
some range of log-sigsoftmax $\{\mathrm{log}(\bm{f}(\bm{z}))|\bm{z} \in S\}$ is not a subset of a $d+1$ dimensional vector space.\label{brsb}
\end{theorem}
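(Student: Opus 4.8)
The plan is to prove a constructive version of the statement: exhibit one $d$-dimensional subspace $S$ for which the image $\{\log(\bm{f}(\bm{z}))\mid \bm{z}\in S\}$ contains $d+2$ linearly independent vectors, since a set lies inside some $(d+1)$-dimensional subspace exactly when the dimension of its span is at most $d+1$. (This is non-vacuous only when $M\ge d+2$, which is precisely the bottleneck regime $d\ll M$; I assume it throughout.) The starting point is the closed form already derived in the excerpt, $\log(\bm{f}(\bm{z}))=2\bm{z}-\log(\bm{1}+\exp(\bm{z}))+c'(\bm{z})\bm{1}$, which I rewrite as $\log(\bm{f}(\bm{z}))=\bm{h}(\bm{z})+\lambda(\bm{z})\bm{1}$, where $h(t)=2t-\log(1+e^{t})$ is applied componentwise and $\lambda(\bm{z})$ is a scalar. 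In the softmax case the analogue of $\bm{h}$ was the identity, which is exactly why Theorem~\ref{sb2} confined the range to a $(d+1)$-dimensional space; here the softplus term makes $\bm{h}$ genuinely nonlinear, and the whole argument is about converting that nonlinearity into one extra independent direction.

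The choice of $S$ matters. For the axis-aligned $S=\mathrm{span}(\bm{e}_1,\dots,\bm{e}_d)$ the softplus only ever acts on coordinates individually, $\bm{h}(\bm{z})$ stays inside $\mathrm{span}(\bm{e}_1,\dots,\bm{e}_d,\bm{1})$, and the range is in fact contained in a $(d+1)$-dimensional space; this is why the theorem asserts \emph{some}, not every, range. To force the nonlinearity to mix coordinates I would instead take $S=\mathrm{span}(\bm{e}_1,\dots,\bm{e}_{d-1},\,\bm{e}_d-\bm{e}_{d+1})$, so that one direction drives coordinates $d$ and $d+1$ in opposite ways. I then evaluate $\log(\bm{f}(\cdot))$ at $d+2$ inputs of $S$: the origin $\bm{0}$; the $d-1$ points with a single $k_i=1$ in a coordinate $i\le d-1$ and zeros elsewhere; and two points $k_d=a$ and $k_d=b$ along the last direction. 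Subtracting the appropriate scalars (the origin maps to the nonzero multiple $-\log(M)\,\bm{1}$ of $\bm{1}$) reduces the first $d$ outputs to nonzero multiples of $\bm{e}_1,\dots,\bm{e}_{d-1}$ and $\bm{1}$, and the last two to vectors lying in $\mathrm{span}(\bm{e}_d,\bm{e}_{d+1})$.

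Linear independence then splits into two checks. Reading off coordinate $d+2$ (which exists because $M\ge d+2$ and on which every $\bm{e}_i$ with $i\le d+1$ vanishes) isolates the total $\bm{1}$-coefficient and lets me peel $\bm{1}$ off all the $\bm{e}_i$ contributions; the $\bm{e}_1,\dots,\bm{e}_{d-1}$ rows are then independent because $h$ is strictly increasing, so $h(1)\neq h(0)$. The one substantive step, and the main obstacle, is to verify that the two outputs along the last direction are independent, i.e. that the determinant $g(a)g(-b)-g(b)g(-a)$ is nonzero for some $a,b$, where $g(k)=h(k)-h(0)$. This is exactly where the nonlinearity of softplus must be used. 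I would establish it from the identity $h(k)-h(-k)=3k$ (together with $h(k)+h(-k)=-\log(2+e^{k}+e^{-k})$), which shows that $g$ is neither even nor odd; hence the ratio $g(k)/g(-k)$ is non-constant, so $a,b$ with $g(a)/g(-a)\neq g(b)/g(-b)$ exist and the determinant is nonzero. Combining the two checks shows the $d+2$ outputs are linearly independent, so the range over this $S$ escapes every $(d+1)$-dimensional subspace, completing the proof.
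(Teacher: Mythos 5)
Your proof is correct, but it takes a genuinely different and more general route than the paper's. The paper argues by contradiction using a single concrete counterexample: it sets $d=1$, $M=3$, $S=\{k\bm{u}\mid k\in\bm{R}\}$ with $\bm{u}=[1,2,0]^T$, evaluates log-sigsoftmax at the three inputs $\bm{0}$, $\bm{u}$, $-\bm{u}$, and checks by explicitly solving the resulting $3\times 3$ homogeneous system that the three outputs are linearly independent, so the range escapes every $2$-dimensional subspace. You instead work with arbitrary $d$ and $M\ge d+2$, exploit the componentwise decomposition $\mathrm{log}(\bm{f}(\bm{z}))=\bm{h}(\bm{z})+\lambda(\bm{z})\bm{1}$ with $h(t)=2t-\mathrm{log}(1+e^t)$, pick the coordinate-mixing subspace $\mathrm{span}(\bm{e}_1,\dots,\bm{e}_{d-1},\bm{e}_d-\bm{e}_{d+1})$, and reduce everything to the nonvanishing of the determinant $g(a)g(-b)-g(b)g(-a)$, which you settle structurally from the identities $h(k)-h(-k)=3k$ and $h(k)+h(-k)=-\mathrm{log}(2+e^k+e^{-k})$. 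I verified both identities and the logic: a constant ratio $g(k)/g(-k)\equiv c$ would force $c^2=1$, i.e.\ $g$ even or odd, and your two identities rule out both (using also that $h$ is strictly increasing so $g(k)\neq 0$ for $k\neq 0$), so suitable $a,b$ exist; the peeling of the $\bm{1}$-component via coordinate $d+2$ then makes the independence of all $d+2$ output vectors rigorous. Your approach buys two things the paper's does not: it proves the claim for every $d$ (the paper's witness literally covers only $d=1$, $M=3$), and your observation that the axis-aligned subspace $\mathrm{span}(\bm{e}_1,\dots,\bm{e}_d)$ keeps the range inside a $(d+1)$-dimensional space correctly explains why the theorem must assert \emph{some} rather than \emph{every} range, a point the paper leaves implicit. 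What the paper's version buys is brevity and a fully explicit numerical witness requiring no structural lemmas. The only cosmetic issue is your phrase about ``subtracting the appropriate scalars'' before testing independence; the subsequent argument via the $(d+2)$-th coordinate is the correct formal version, so nothing is actually missing.
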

\begin{proof}
We prove this by contradiction.
If Theorem~\ref{brsb} does not hold, every range of log-sigsoftmax 
$\{\mathrm{log}(\bm{f}(\bm{z}))|\bm{z}\in S\}$ is a subset of a $d+1$ dimensional
 vector space.
When we provide a counterexample of this statement,
we prove Theorem~\ref{brsb} since this statement is the negation of Theorem~\ref{brsb}.
The counter example is the case in which $S$ is the one dimensional vector space (i.e., $d=1$), 
$S=\{k\bm{u}|k\in R\}$ and $\bm{u}=[1,2,0]^T$.
Under the above condition, from Definition 1 in the paper,
outputs of log-sigsoftmax are as follows:
\begin{align}
\textstyle \mathrm{log}(\bm{f}(\bm{z}))=\begin{bmatrix}
2k\\4k\\ 0\end{bmatrix}-
\begin{bmatrix}
\mathrm{log}(1+\mathrm{exp}(k))\\
\mathrm{log}(1+\mathrm{exp}(2k))\\
 \mathrm{log}(2)\end{bmatrix}
-\mathrm{log}\left(\frac{1+2\sigma(k)\mathrm{exp}(k)+2\sigma(2k)\mathrm{exp}(2k)}{2} \right)\bm{1}.
\end{align}
From $S=\{k\bm{u}|k\in R\}$, we choose three inputs $\bm{z}_1=[0,0,0]^T, \bm{z}_2=[1,2,0]^T, \bm{z}_3=[-1,-2,0]^T$
and investigate the outputs.
The outputs of log-sigsoftmax are as follows:
\begin{align}
\textstyle\mathrm{log}(\bm{f}(\bm{z}_1))&\textstyle =-\mathrm{log}(3)\bm{1},\\
\textstyle\mathrm{log}(\bm{f}(\bm{z}_2))&\textstyle =\begin{bmatrix}
2-\mathrm{log}(1+\mathrm{exp}(1))
\\4-\mathrm{log}(1+\mathrm{exp}(2))\\
- \mathrm{log}(2)\end{bmatrix}
-\mathrm{log}\left(\frac{1+2\sigma(1)\mathrm{exp}(1)+2\sigma(2)\mathrm{exp}(2)}{2} \right)\bm{1},\\
\textstyle \mathrm{log}(\bm{f}(\bm{z}_3))&\textstyle =\begin{bmatrix}
-2-\mathrm{log}(1+\mathrm{exp}(-1))
\\-4-\mathrm{log}(1+\mathrm{exp}(-2))\\
- \mathrm{log}(2)\end{bmatrix}
-\mathrm{log}\left(\frac{1+2\sigma(-1)\mathrm{exp}(-1)+2\sigma(-2)\mathrm{exp}(-2)}{2} \right)\bm{1}.
\end{align}
To evaluate linear independence, we examine the solution of the $\alpha_1\mathrm{log}(\bm{f}(\bm{z}_1))+
\alpha_2\mathrm{log}(\bm{f}(\bm{z}_2)+
\alpha_3\mathrm{log}(\bm{f}(\bm{z}_3)=\bm{0}$.
If its solution is only $\alpha_1=\alpha_2=\alpha_3=0$, $\mathrm{log}(\bm{f}(\bm{z}_1))$, $\mathrm{log}(\bm{f}(\bm{z}_2)$, and
$\mathrm{log}(\bm{f}(\bm{z}_3)$ are linearly independent.
Each element of 
$\alpha_1\mathrm{log}(\bm{f}(\bm{z}_1))+
\alpha_2\mathrm{log}(\bm{f}(\bm{z}_2)+
\alpha_3\mathrm{log}(\bm{f}(\bm{z}_3)=\bm{0}$ becomes the following equations:
\begin{numcases}
{}\textstyle
-\alpha_1\mathrm{log}(3)+\alpha_2\{2-\mathrm{log}(1+\mathrm{exp}(1))-\mathrm{log}(\frac{1+2\sigma(1)\mathrm{exp}(1)+2\sigma(2)\mathrm{exp}(2))}{2}\} &\nonumber\\ \textstyle
 +\alpha_3\{-2-\mathrm{log}(1+\mathrm{exp}(-1))-\mathrm{log}(\frac{1+2\sigma(-1)\mathrm{exp}(-1)+2\sigma(-2)\mathrm{exp}(-2))}{2}\}=0,&
\label{eq1}\\ \textstyle
-\alpha_1\mathrm{log}(3)+\alpha_2\{4-\mathrm{log}(1+\mathrm{exp}(2))-\mathrm{log}(\frac{1+2\sigma(1)\mathrm{exp}(1)+2\sigma(2)\mathrm{exp}(2))}{2}\} &\nonumber\\ \textstyle
 +\alpha_3\{-4-\mathrm{log}(1+\mathrm{exp}(-2))-\mathrm{log}(\frac{1+2\sigma(-1)\mathrm{exp}(-1)+2\sigma(-2)\mathrm{exp}(-2))}{2}\}=0,&
\label{eq2}\\ \textstyle
-\alpha_1\mathrm{log}(3)+\alpha_2\{-\mathrm{log}(2)-\mathrm{log}(\frac{1+2\sigma(1)\mathrm{exp}(1)+2\sigma(2)\mathrm{exp}(2))}{2}\}&\nonumber\\ \textstyle
 +\alpha_3\{-\mathrm{log}(2)-\mathrm{log}(\frac{1+2\sigma(-1)\mathrm{exp}(-1)+2\sigma(-2)\mathrm{exp}(-2))}{2}\}=0.&\label{eq3}
\end{numcases}
From \req{eq3}, we have
\begin{align}
\textstyle
\alpha_1=&\frac{\!\alpha_2}{\mathrm{log}(3)}\{-\mathrm{log}(2)-\mathrm{log}(\frac{1+2\sigma(1)\mathrm{exp}(1)+2\sigma(2)\mathrm{exp}(2))}{2}\}\nonumber\\
&+\frac{\alpha_3}{\mathrm{log}(3)}\{-\mathrm{log}(2)-\mathrm{log}(\frac{1+2\sigma(-1)\mathrm{exp}(-1)+2\sigma(-2)\mathrm{exp}(-2))}{2}\}.
 \label{al1}
\end{align}
Substituting \req{al1} for \req{eq1} and \req{eq2}, we have
\begin{numcases}
{}\textstyle
\alpha_2\{2-\mathrm{log}(1+\mathrm{exp}(1))+\mathrm{log}(2)\} 
 +\alpha_3\{-2-\mathrm{log}(1+\mathrm{exp}(-1))+\mathrm{log}(2)\}=0,&
\label{eq4}\\ \textstyle
\alpha_2\{4-\mathrm{log}(1+\mathrm{exp}(2))+\mathrm{log}(2)\} 
 +\alpha_3\{-4-\mathrm{log}(1+\mathrm{exp}(-2))+\mathrm{log}(2)\}=0.&
\label{eq5}
\end{numcases}
From \req{eq4}, we have
\begin{align}
\textstyle
\alpha_2=\alpha_3\frac{\{2+\mathrm{log}(1+\mathrm{exp}(-1))-\mathrm{log}(2)\}}{\{2-\mathrm{log}(1+\mathrm{exp}(1))+\mathrm{log}(2)\} },
 \label{al2}
\end{align}
and substituting \req{al2} for \req{eq5}, we have
\begin{align}
\textstyle
\alpha_3\left[ \frac{\{2+\mathrm{log}(1+\mathrm{exp}(-1))-\mathrm{log}(2)\}\{4-\mathrm{log}(1+\mathrm{exp}(2))+\mathrm{log}(2)\} 
}{\{2-\mathrm{log}(1+\mathrm{exp}(1))+\mathrm{log}(2)\} }
\!\!+\!\!\{-4-\mathrm{log}(1+\mathrm{exp}(-2))+\mathrm{log}(2)\}\right]
\!=\!0.&
\label{eq6}
\end{align}
The solution of \req{eq6} is only $\alpha_3=0$, and thus, $\alpha_1=\alpha_2=\alpha_3=0$.
We have
$\alpha_1\mathrm{log}(\bm{f}(\bm{z}_1))+
\alpha_2\mathrm{log}(\bm{f}(\bm{z}_2)+
\alpha_3\mathrm{log}(\bm{f}(\bm{z}_3)=\bm{0}$ 
if and only if $\alpha_1=\alpha_2=\alpha_3=0$.
Therefore, $\mathrm{log}(\bm{f}(\bm{z}_1))$, $\mathrm{log}(\bm{f}(\bm{z}_2))$ and $\mathrm{log}(\bm{f}(\bm{z}_3))$
are linearly independent, i.e., output vectors can be three linearly independent vectors
even though $d+1=2$. 
Therefore, the output of log-sigsoftmax can be greater than $d+1$ linearly independent vectors, and thus,
the range of log-sigsoftmax is not a subset of $d+1$ dimension vector space.
This contradicts the statement that every range of log-sigsoftmax 
$\{\mathrm{log}(\bm{f}(\bm{z}))|\bm{z}\in S\}$ is a subset of a $d+1$ dimensional
 vector space.
 As a result, some range of log-sigsoftmax is not a subset of a $d+1$ dimensional vector space.
\end{proof}
\setcounter{theorem}{4}
\begin{theorem}
Sigsoftmax has the following properties:
\begin{enumerate}
 \setlength{\itemsep}{-1pt}
 \item Nonlinearity of $\mathrm{log}(\bm{g}(\bm{z}))$: $\mathrm{log}(\bm{g}(\bm{z}))=2\bm{z}-\mathrm{log}(\bm{1}+\mathrm{exp}(\bm{z}))$.
\item Numerically stable:
$\textstyle 
\frac{\partial \mathrm{log}\left[\bm{f}(\bm{z})\right]_i}{\partial z_j}=
\begin{cases}
\textstyle (1-\left[\bm{f}(\bm{z})\right]_j)(2-\sigma(z_j))~~~&i=j,\\
\textstyle -\left[\bm{f}(\bm{z})\right]_j(2-\sigma(z_j))~~~&i\neq j.
\end{cases}$
\item Non-negative: $[\bm{g}(\bm{z})]_i=\mathrm{exp}(z_i)\sigma(z_i)\geq 0$.
\item Monotonically increasing: $z_1\leq z_2 \Rightarrow \mathrm{exp}(z_1)\sigma(z_1)\leq \mathrm{exp}(z_2)\sigma(z_2)$.
\end{enumerate}
\end{theorem}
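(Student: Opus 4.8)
The plan is to verify all four properties directly from the single definition $[\bm{g}(\bm{z})]_i=\mathrm{exp}(z_i)\sigma(z_i)$, with the bulk of the work resting on one derivative identity that simultaneously drives properties 2 and 4.

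First, for property 1 I would simply take logarithms. Using the identity $\mathrm{log}(\sigma(z))=z-\mathrm{log}(1+\mathrm{exp}(z))$ already established in the main text, $\mathrm{log}([\bm{g}(\bm{z})]_i)=z_i+\mathrm{log}(\sigma(z_i))=2z_i-\mathrm{log}(1+\mathrm{exp}(z_i))$, which gives the claimed vector form $\mathrm{log}(\bm{g}(\bm{z}))=2\bm{z}-\mathrm{log}(\bm{1}+\mathrm{exp}(\bm{z}))$; the softplus term is nonlinear, so this is not affine. Property 3 is immediate: $\mathrm{exp}(z_i)>0$ and $\sigma(z_i)\in(0,1)$, so the product is strictly positive and in particular non-negative.

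The key step is to compute $\frac{d}{dz}[\mathrm{exp}(z)\sigma(z)]$. Using $\sigma'(z)=\sigma(z)(1-\sigma(z))$, I would obtain $\frac{d}{dz}[\mathrm{exp}(z)\sigma(z)]=\mathrm{exp}(z)\sigma(z)(2-\sigma(z))$, i.e.\ the derivative of $[\bm{g}(\bm{z})]_i$ equals $[\bm{g}(\bm{z})]_i(2-\sigma(z_i))$. This identity does double duty. For property 4, since $[\bm{g}(\bm{z})]_i>0$ and $\sigma(z)<1<2$, the derivative is strictly positive, so $\mathrm{exp}(z)\sigma(z)$ is increasing and the monotonicity claim follows. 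For property 2, I would write $\mathrm{log}[\bm{f}(\bm{z})]_i=\mathrm{log}([\bm{g}(\bm{z})]_i)-\mathrm{log}(\sum_m[\bm{g}(\bm{z})]_m)$ and differentiate: the first term contributes $(2-\sigma(z_j))$ when $i=j$ and nothing otherwise, while the normalizer contributes $-\frac{[\bm{g}(\bm{z})]_j(2-\sigma(z_j))}{\sum_m[\bm{g}(\bm{z})]_m}=-[\bm{f}(\bm{z})]_j(2-\sigma(z_j))$ in every case; combining the two cases yields the stated gradient. Numerical stability then follows by inspection, because the expression contains no division by $[\bm{f}(\bm{z})]_i$ and the factor $2-\sigma(z_j)$ stays bounded in $(1,2)$.

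I do not anticipate a genuine obstacle here; the only point requiring care is the bookkeeping in the $i=j$ versus $i\neq j$ split of the gradient, where one must remember that the normalizer $\sum_m[\bm{g}(\bm{z})]_m$ depends on $z_j$ in both cases while the numerator contributes the $(2-\sigma(z_j))$ term only when $i=j$.
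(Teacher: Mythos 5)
Your proposal is correct and follows essentially the same direct verification as the paper: take logs for property 1, positivity of the factors for property 3, and differentiate for properties 2 and 4. The only cosmetic difference is that you factor the derivative as $\mathrm{exp}(z)\sigma(z)(2-\sigma(z))$ and reuse it for both the gradient and the monotonicity claims, whereas the paper writes it as $\frac{\mathrm{exp}(z)+2}{(1+\mathrm{exp}(-z))^2}$ for property 4 and derives the gradient via the quotient rule; these are algebraically identical.
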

\begin{proof}
First, we have $\mathrm{log}(\bm{g}(\bm{z}))=2\bm{z}-\mathrm{log}(\bm{1}+\mathrm{exp}(\bm{z}))$
since $[\bm{g}(\bm{z})]_i=\mathrm{exp}(z_i)\sigma(z_i)=\frac{\mathrm{exp}(z_i)}{1+\mathrm{exp}(-z_i)}=\frac{\mathrm{exp}(2z_i)}{1+\mathrm{exp}(z_i)}$.
 $\mathrm{log}(1+\mathrm{exp}(z))$ is softplus and is a nonlinear function.
Therefore, $\mathrm{log}(\bm{g}(\bm{z}))$ is a nonlinear function.
Second, since we have $\frac{\mathrm{d} \mathrm{exp}(z)}{\mathrm{d} z}=\mathrm{exp}(z)$ and  $\frac{\mathrm{d} \sigma(z)}{\mathrm{d} z}
=\sigma(z)(1-\sigma(z))$, 
we have \begin{align*}
\textstyle 
\frac{\partial \mathrm{log}\left[\bm{f}(\bm{z})\right]_i}{\partial z_j}
&\textstyle=\!\frac{1}{\left[\bm{f}(\bm{z})\right]_i}\frac{\partial \left[\bm{f}(\bm{z})\right]_i}{\partial z_j},\\
& \textstyle=\!\frac{1}{\left[\bm{f}(\bm{z})\right]_i}\!\!
\left\{\!\frac{1}{\sum_{m=1}^M \mathrm{exp}(z_m)\sigma(z_m)}\frac{\partial\mathrm{exp}(z_i)\sigma(z_i)}{\partial z_j}
\!-\!\frac{\left[\bm{f}(\bm{z})\right] _i}{\sum_{m=1}^M\! \mathrm{exp}(z_m)\sigma(z_m)}\frac{\partial \sum_{m=1}^M\! \mathrm{exp}(z_m)\sigma(z_m)}{\partial z_j}\!\right\},\\
&\textstyle=\!
\begin{cases}
\textstyle (1-\left[\bm{f}(\bm{z})\right]_j)(2-\sigma(z_j))~~~&i=j,\\
\textstyle -\left[\bm{f}(\bm{z})\right]_j(2-\sigma(z_j))~~~&i\neq j.
\end{cases}.
\end{align*}
Third, since $\mathrm{\exp}(z)\geq 0$ and $\sigma(z)\geq 0$, we have $\mathrm{\exp}(z)\sigma(z)\geq 0$.
Finally, the derivative of $\mathrm{exp}(z)\sigma(z)$ is 
 $\frac{\mathrm{d} \mathrm{exp}(z)\sigma(z)}{\mathrm{d} z}=\frac{\mathrm{exp}(z)+2}{(1+\mathrm{exp}(-z))^2}\geq 0$ for all $z$,
  and thus, $\mathrm{exp}(z)\sigma(z)$ is monotonically increasing.
\end{proof}
\section{Properties of output functions composed of ReLU and sigmoid}
In this section, we investigate properties of output functions composed of ReLU and sigmoid.
\subsection{ReLU-based output function}
A ReLU-based output function is given as
\begin{align}
\textstyle 
\left[\bm{f}(\bm{z})\right]_i =\frac{\mathrm{ReLU}(z_i)}{\sum_{m=1}^M\mathrm{ReLU}(z_m)}.
\end{align}

The ReLU-based function does not satisfy all the desirable properties as follows:
\begin{enumerate}
 \setlength{\itemsep}{-1pt}
\item Nonlinearity of $\mathrm{log}(\bm{g}(\bm{z}))$:
The logarithm of ReLU is as follows:
\begin{align*}
[\mathrm{log}(\mathrm{ReLU}(\bm{z}))]_i=\begin{cases}
\mathrm{log}(z_i)~~ &\mathrm{if} ~~z_i>0,\\
-\infty~~&\mathrm{if}~~z_i\leq 0.
\end{cases}
\end{align*}
This function is obviously nonlinear.
\item Numerically unstable:
\begin{align}
\textstyle 
\frac{\partial \mathrm{log}\left[\bm{f}(\bm{z})\right]_i}{\partial z_j}=
\begin{cases}
\textstyle  \frac{1}{\mathrm{ReLU}(z_i)}\frac{\partial \mathrm{ReLU}(z_i)}{\partial z_j} -\frac{1}{\sum_{m=1}^M\mathrm{ReLU}(z_m)}\frac{\partial \mathrm{ReLU}(z_i)}{\partial z_j} ~~~&i=j,\\
\textstyle -\frac{1}{\sum_{m=1}^M\mathrm{ReLU}(z_m)}\frac{\partial \mathrm{ReLU}(z_j)}{\partial z_j} ~~~&i\neq j.
\end{cases}\end{align}
We can see that the derivative of a ReLU-based function has the division by $\mathrm{ReLU}(z_i)$.
Since $\mathrm{ReLU}(z_i)$ can be close to zero, the calculation of gradient is numerically unstable.
\item Non-negative: $[\bm{g}(\bm{z})]_i=\max(z_i,0)$ is obviously greater than or equal to 0.
\item Monotonically increasing: $z_1\leq z_2 \Rightarrow\max(z_1,0)\leq \max(z_2,0)$ since the derivative of ReLU is always greater than or equal to 0.
\end{enumerate}

From the above, the ReLU-based function is numerically unstable.
Therefore, in the experiment, we use the following function:
\begin{align}
\textstyle \left[\bm{f}(\bm{z})\right]_i=\frac{\mathrm{ReLU}(z_i)+\varepsilon}{\sum_{m=1}^M\mathrm{ReLU}(z_m)+\varepsilon},
\label{ReLU-based}
\end{align}
where $\varepsilon$ is the hyper parameter of small value.
In the experiment, we used $\varepsilon=10^{-8}$.
\subsection{Sigmoid-based output function}
A sigmoid-based output function is given as
\begin{align}
\textstyle 
\left[\bm{f}(\bm{z})\right]_i =\frac{\sigma(z_i)}{\sum_{m=1}^M\sigma(z_m)}.
\end{align}

The sigmoid-based function satisfies the desirable properties as follows:
\begin{enumerate}
 \setlength{\itemsep}{-1pt}
\item Nonlinearity of $\mathrm{log}(\bm{g}(\bm{z}))$:
The logarithm of sigmoid is as follows:
\begin{align*}
\mathrm{log}(\sigma(\bm{z}))=\bm{z}-\mathrm{log}(\bm{1}+\mathrm{exp}(\bm{z})).
\end{align*}
This function is obviously nonlinear.
\item Numerically stable:
\begin{align*}
\textstyle 
\frac{\partial \mathrm{log}\left[\bm{f}(\bm{z})\right]_i}{\partial z_j}=
\begin{cases}
\textstyle (1-\left[\bm{f}(\bm{z})\right]_j)(1-\sigma(z_j))~~~&\textstyle i=j,\\
\textstyle -\left[\bm{f}(\bm{z})\right]_j(1-\sigma(z_j))~~~&\textstyle i\neq j.
\end{cases}
\end{align*}
We can see that this function does not have the division.
Therefore, the calculation of gradient is numerically stable.
\item Non-negative: We have $[\bm{g}(\bm{z})]_i=\sigma(z_i)\geq 0$.
However, sigmoid is also bounded by 1.
This may be the cause of the limitation of representational capacity.
\item Monotonically increasing: $z_1\leq z_2 \Rightarrow\sigma(z_1)\leq \sigma(z_2)$ since the derivative of sigmoid
$\sigma(z)(1-\sigma(z))$ is greater than or equal to 0.
\end{enumerate}
\section{Detailed experimental conditions and results}
\subsection{Experimental conditions}
\subsubsection{Conditions for activation functions}
For comparing softmax, sigsoftmax, the ReLU-based function, and the sigmoid-based function,
we trained a three-layer LSTM by following \cite{merity2018regularizing}.
We used the codes provided by \cite{merity2018regularizing,krause2017dynamic}.\footnote{\url{https://github.com/salesforce/awd-lstm-lm};
\\ \url{https://github.com/benkrause/dynamic-evaluation}}
 \citet{merity2018regularizing} further tuned
 some hyper parameters to obtain results better than those in the original paper \cite{merity2018regularizing} in their code.
For fair comparison, we only changed the code of \cite{merity2018regularizing} as
 (i) replacing softmax with sigsoftmax, the ReLU-based function and sigmoid-based function,
 (ii) using various random seeds, and (iii) using the epochs twice as large as the original epochs in \cite{merity2018regularizing}.
The ReLU-based function is defined by \req{ReLU-based}, and we used $\varepsilon=10^{-8}$.

We used the experimental conditions of \cite{merity2018regularizing}.
The number of units of LSTM layers was set to 1150, and embedding size was 400.
Weight matrices were initialized with a uniform distribution $U(-0.1,0.1)$ for the embedding layer,
and all other weights were initialized with $U(-\frac{1}{\sqrt{H}},\frac{1}{\sqrt{H}})$
where $H$ is the number of hidden units.

All models were trained by a non-monotonically triggered variant of averaged SGD (NT-ASGD) \cite{merity2018regularizing}
with the learning rate of 30, and we carried out gradient clipping with the threshold of 0.25.
We used dropout connect \cite{merity2018regularizing}, and dropout rates on the word vectors,
on the output between LSTM layers, on the output of the final LSTM layer, and on the embedding layer
were set to (0.4,0.25,0.4,0.1) on PTB, and (0.65,0.2,0.4,0.1) on WT2.
Batch sizes were set to 20 on PTB, and 80 on WT2.
Numbers of training epochs were set to 1000 on PTB and 1500 on WT2.
After training, we ran ASGD as a fine-tuning step until the stopping criterion was met.
We used a random backpropagation through time (BPTT) length which is $\mathcal{N}(70,5)$ with probability 0.95
and $\mathcal{N}(35,5)$ with probability 0.05.
We applied activation regularization (AR) and temporal activation regularization (TAR)
to the output of the final RNN layer.
Their scaling coefficients were 2 and 1, respectively.

After the fine-tuning step, we used dynamic evaluation \cite{krause2017dynamic}.
In this step, we used grid-search for hyper parameter tuning provided by \cite{krause2017dynamic}.
The learning rate $\eta$ was tuned in  $[3\times 10^{-5}, 4\times 10^{-5}, 5\times 10^{-5}, 6\times 10^{-5}, 7\times 10^{-5},1\times 10^{-4}]$, and
decay rate $\lambda$ was tuned in  $[1\times 10^{-3}, 2\times 10^{-3}, 3\times 10^{-3}, 5\times 10^{-3}]$.
$\epsilon$ was set to 0.001, and batch size was set to 100.

 We applied the above procedure (training and finetuning each model, applying dynamic evaluation) 10 times,
  and evaluated the average of minimum validation perplexities
 and the average of test perplexities.
\subsubsection{Conditions for mixture models; MoS with MoSS}
We trained a three-layer LSTM by following \cite{softmaxbottle} to compare MoSS with MoS.
We also used the codes provided by \cite{softmaxbottle},\footnote{\url{https://github.com/zihangdai/mos}}
and only changed the code as (i) replacing MoSS with MoS, (ii) using various random seeds.
After we trained the models, we finetuned them and applied dynamic evaluation to the finetuned models.

We used the experimental conditions of \cite{softmaxbottle}. In this experiments, 
the numbers of units of three LSTM layers were set to [960, 960, 620], and embedding size was 280 on PTB.
On WT2, the numbers of units of LSTM layers were set to [1150, 1150, 650], and embedding size was 300.
The number of mixture was set to 15 on both datasets.
In the same way as the above experimental conditions,
weight matrices were initialized with a uniform distribution $U(-0.1,0.1)$ for the embedding layer,
and all other weights were initialized with $U(-\frac{1}{\sqrt{H}},\frac{1}{\sqrt{H}})$.
On both datasets, we used
word level variational drop out with the rate of 0.10,
recurrent weight dropout with rate of 0.5, and context vector level variational drop out with the rate of 0.30.
In addition, 
embedding level variational dropout with the rate of 0.55 and
hidden level variational dropout with the rate of 0.225
were used on PTB.
On WT2, we used
embedding level variational drop out with the rate of 0.40 and
hidden level variational drop out with the rate of 0.225.
The optimization method was the same as in the previous section.

At the dynamic evaluation step, 
the learning rate was set to 0.002 and batchsize was set to 100 on both datasets.
On PTB, $\epsilon$ was set to 0.001 and decay rate $\lambda$ was set to 0.075.
On WT2, we set $\epsilon$ to 0.002 and decay rate $\lambda$ to 0.02.
All the above conditions were the same as those in \cite{softmaxbottle}.

 We applied the above procedure five times and evaluated the average of minimum validation perplexities
 and the average of test perplexities.
\subsection{Results}
\begin{table}[tbp]
\caption{Results of the language modeling experiment on PTB. Valid. means the validation perplexity, and dynamic eval. means dynamic evaluation \cite{krause2017dynamic}.}
\centering
\scalebox{0.9}{
\begin{tabular}{ccccccc}
\toprule
&Softmax&$g$:ReLU&$g$: Sigmoid&Sigsoftmax&MoS&MoSS\\ \cmidrule(r){1-5}\cmidrule(l){6-7}
{\scriptsize Valid. w/o finetune}&61.1 $\pm$0.4&(1.85$\pm$0.2)$\times 10^3$&60.7 $\pm$0.2&61.0 $\pm$0.2&58.4$\pm$0.2&58.4$\pm$0.3 \\
{\scriptsize Test w/o finetune}&58.8 $\pm$0.4  &(1.54$\pm$0.2)$\times 10^3$&58.5 $\pm$0.2&58.4 $\pm$0.2&56.3$\pm$0.3&56.2$\pm$0.2 \\ \cmidrule(r){1-5}\cmidrule(l){6-7}
Valid.&59.2 $\pm$0.4&(1.51$\pm$0.1)$\times 10^3$&58.7 $\pm$0.4&59.2 $\pm$0.4&56.8$\pm$0.2&56.9$\pm$0.1 \\
Test&57.0 $\pm$0.6  &(1.24$\pm$0.08)$\times 10^3$&56.4 $\pm$0.2&56.6 $\pm$0.4& 54.7$\pm$0.08&54.6$\pm$0.2\\ \cmidrule(r){1-5}\cmidrule(l){6-7}
{\scriptsize Valid.+dynamic eval.}&51.2$\pm$0.5&(4.91$\pm$5)$\times 10^{3}$&\bf{49.2}$\pm$0.4  &49.7$\pm$0.5 &48.6$\pm$0.2&\bf{48.3}$\pm$0.1\\
{\scriptsize Test +dynamic eval.}&50.5$\pm$0.5&(2.78$\pm$8)$\times 10^5$ &\bf{48.9}$\pm$0.3  &49.2$\pm$0.4 &48.0$\pm$0.1&\bf{47.7}$\pm$0.07 \\
\bottomrule
\end{tabular}
}
\label{PTB results2}
\end{table}
\begin{table}[tbp]
\caption{Results of the language modeling experiment on WT2. Valid. means the validation perplexity, and dynamic eval. means dynamic evaluation \cite{krause2017dynamic}.}
\centering
\scalebox{0.9}{
\begin{tabular}{ccccccc}
\toprule
&Softmax&$g$:ReLU&$g$:Sigmoid&Sigsoftmax&MoS&MoSS\\ \cmidrule(r){1-5}\cmidrule(l){6-7}
{\scriptsize Valid. w/o finetune}&68.0$\pm$0.2 &(8.74$\pm$0.7)$\times 10^2$  &72.8$\pm 0.3$&67.8$\pm$0.1&65.9$\pm$0.5&65.1$\pm$0.2 \\
{\scriptsize Test w/o finetune}&65.2$\pm$0.2&(7.97$\pm$0.7)$\times 10^2$   &69.7$\pm$0.3&65.0$\pm$0.2&63.3$\pm$0.4&62.5$\pm$0.3 \\ \cmidrule(r){1-5}\cmidrule(l){6-7}
Valid.&67.4$\pm$0.2 &(6.48$\pm$0.1)$\times 10^2$  &70.8$\pm 0.1$&67.4$\pm$0.2&64.0$\pm$0.3&63.7$\pm$0.3 \\
Test&64.7$\pm$0.2&(5.93$\pm$0.08)$\times 10^2$   &68.2$\pm$0.1&64.2$\pm$0.1&61.4$\pm$0.4&61.1$\pm$0.3 \\ \cmidrule(r){1-5}\cmidrule(l){6-7}
{\scriptsize Valid.+dynamic eval.}&45.3$\pm$0.2&(1.79$\pm$0.8)$\times 10^{3}$&45.7$\pm$0.1   &\bf{44.9$\pm$0.1}&42.5$\pm$0.1&\bf{42.1$\pm$0.2} \\
{\scriptsize Test +dynamic eval.}&43.3$\pm$0.1&(2.30$\pm$2)$\times 10^{4}$&43.5$\pm$0.1   &\bf{42.9$\pm$0.1}&40.8$\pm$0.03&\bf{40.3$\pm$0.2} \\
\bottomrule
\end{tabular}
}
\label{WT2 results2}
\end{table}
Tabs. \ref{PTB results2} and \ref{WT2 results2} list the validation and test perplexities
after the training step, fine-tuning step, and dynamic evaluation.
We can see that the validation and test perplexities of sigsoftmax are significantly
reduced by the dynamic evaluation.
Since the dynamic evaluation adapts models to recent sequences,
these results imply that the high expressive power of sigsoftmax
enabled the model to more flexibly adapt to the validation and test data in the dynamic evaluation. 
In addition, under the conditions tuned for softmax in \cite{merity2018regularizing},
 the sigsoftmax-based model might have slightly overfitted to the training data due to the high expressive power.
 We observed that training perplexities of sigsoftmax are smaller 
 than those of softmax at the last epochs.\footnote{Note that models at the last epochs were not used for evaluation
  since their validation perplexities were not the lowest in the training.}
 By tuning the hyper parameters for regularization methods such as dropout rates,
sigsoftmax can achieve better performance.
 \begin{figure}[t]
\centering
\includegraphics[width=13.8cm]{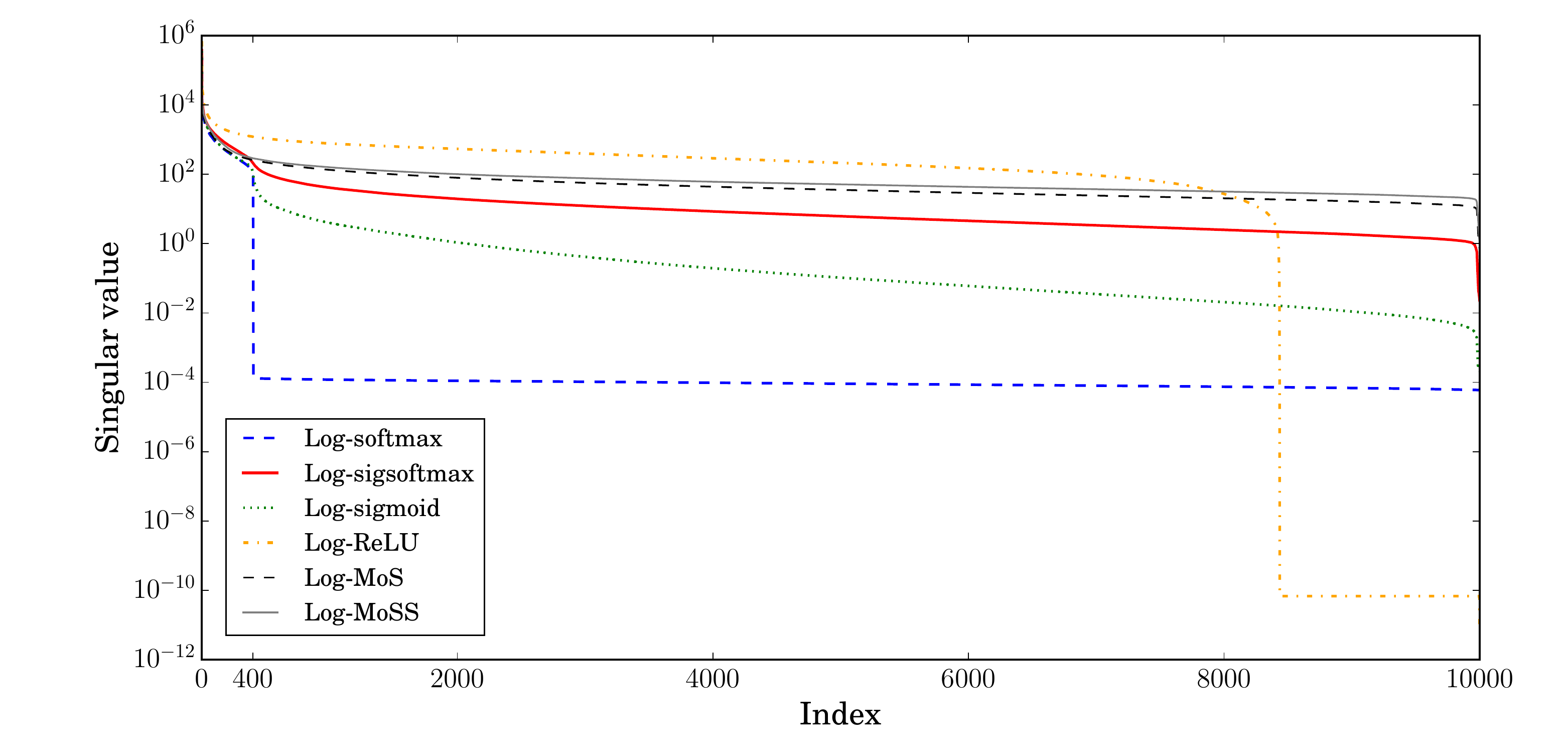}
\caption{Singular values of $\hat{\bm{A}}$ on PTB test set.}
\label{RanksPlot1}
\end{figure}
\begin{figure}[tbp]
\centering
\includegraphics[width=13.8cm]{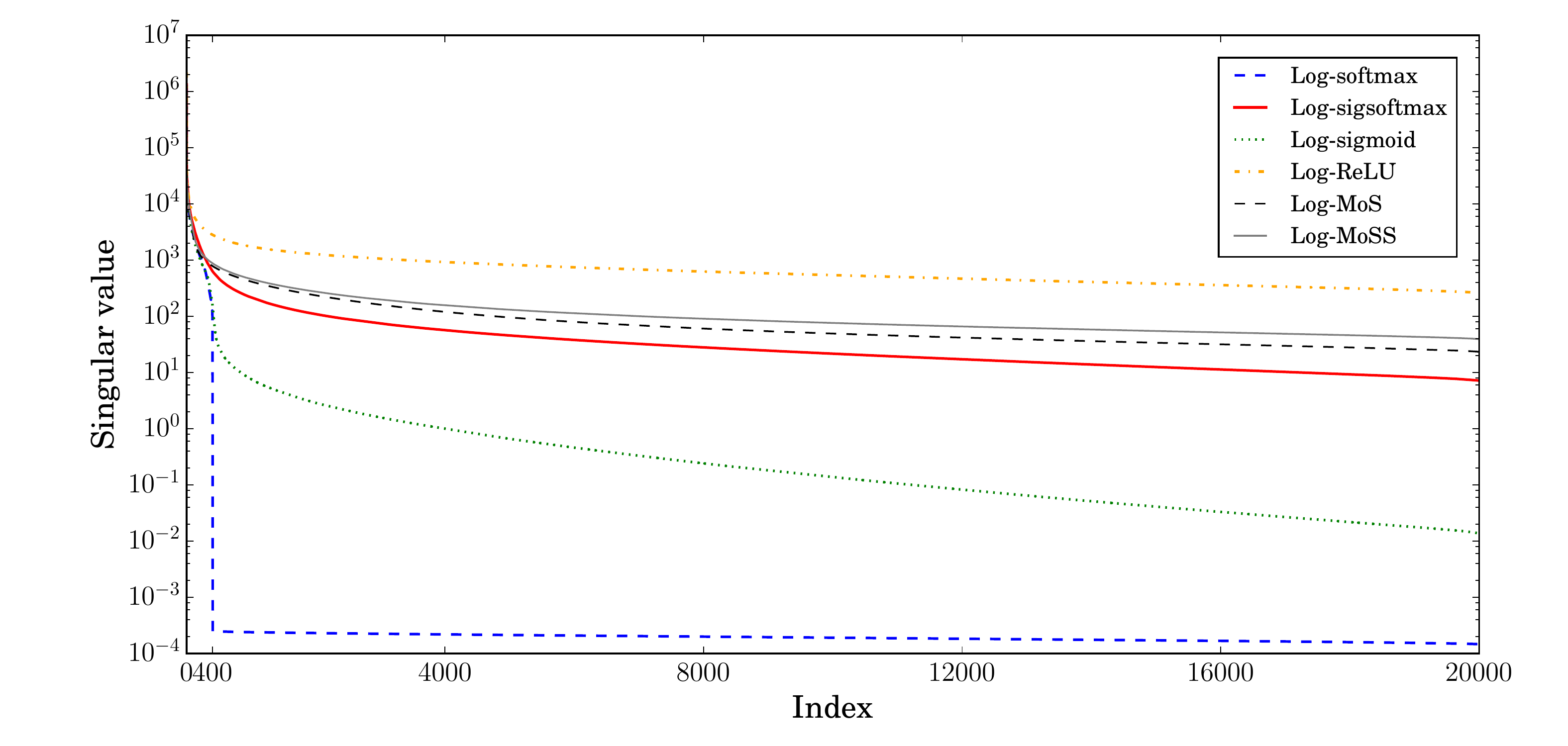}
\caption{Top 20,000 singular values of $\hat{\bm{A}}$ on WT2 test set.}
\label{RanksPlot2}
\end{figure}

 \rfig{RanksPlot1} shows the singular values of $\hat{\bm{A}}$ on PTB,
 and \rfig{RanksPlot2} shows the top 20,000 singular values of $\hat{\bm{A}}$ 
 on WT2.
 On both datasets, singular values of softmax significantly decrease at 403.
 In addition, singular values of the ReLU-based function also significantly decrease at 8243 on PTB.
On the other hand, singular values of sigsoftmax, sigmoid, MoS and MoSS smoothly decrease.
Therefore, their ranks might be greater than those in Tab. 3 in the paper.
\end{document}